\newlength\aftertitskip     \newlength\beforetitskip
\newlength\interauthorskip  \newlength\aftermaketitskip
\def\maketitle{\par
 \begingroup
   \def\thefootnote{\fnsymbol{footnote}}
   \def\@makefnmark{\hbox to 4pt{$^{\@thefnmark}$\hss}}
   \@maketitle \@thanks
 \endgroup
\setcounter{footnote}{0}
 \let\maketitle\relax \let\@maketitle\relax
 \gdef\@thanks{}\gdef\@author{}\gdef\@title{}\let\thanks\relax}
\def\@startauthor{\noindent \normalsize\bf}
\def\@endauthor{}
\def\@starteditor{\noindent \small {\bf Editor:~}}
\def\@endeditor{\normalsize}
\def\@maketitle{\vbox{\hsize\textwidth
 \linewidth\hsize \vskip \beforetitskip
 {\begin{center} \LARGE\@title \par \end{center}} \vskip \aftertitskip
 {\def\and{\unskip\enspace{\rm and}\enspace}%
  \def\addr{\small\it}%
  \def\email{\hfill\small\tt}%
  \def\name{\normalsize\bf}%
  \def\AND{\@endauthor\rm\hss \vskip \interauthorskip \@startauthor}
  \@startauthor \@author \@endauthor}
}}
\numberwithin{equation}{section}
  \newcommand{\cc}{\mathcal{C}}                     \newcommand{\cy}{\mathcal{Y}} 
\newcommand{\pr}{\mathbb{P}} 
\newcommand{\argmax}{\textrm{argmax}}
\newcommand{\argmin}{\textrm{argmin}}
\theoremstyle{plain}
\newtheorem{theorem}{Theorem}
\newtheorem{lemma}[theorem]{Lemma}
\theoremstyle{definition}
\theoremstyle{remark}
\newcommand{\refalgo}[1]{Algo.~\ref{#1}}
\newcommand{\refsec}[1]{Sec.~\ref{#1}}
\newcommand{\reffig}[1]{Fig.~\ref{#1}}
\newcommand{\refthm}[1]{Thm.~\ref{#1}}
\newcommand{\Ph}{\widehat{P}_k}
\newcommand{\PC}{P_{\cc,k}}
\newcommand{\Lc}{\widetilde{L}}
\newcommand{\Yt}{\widetilde{Y}}
\newcommand{\Pp}{P'}
\newcommand{\pns}{\delta_{\Pi}} 
\newcommand{\epsm}{\varepsilon_{\Pi}} 
\DeclareMathOperator{\sspan}{span}
\newcommand{\dpp}{\textsc{Dpp}\xspace}
\newcommand{\kdpp}[1]{\textsc{Dpp}_k(#1)}
\newcommand{\tvnorm}[1]{\|#1\|_{\text{\rm tv}}}
\newcommand{\algo}{\textsc{CoreDpp}\xspace}
\newcommand{\algoz}{\textsc{CoreDpp-z}\xspace}
\newcommand{\algor}{\textsc{CoreDpp-r}\xspace}
\title{Efficient Sampling for $k$-Determinantal\\ Point Processes}
\author{\name Chengtao Li \email{ctli@mit.edu}\\
  \addr{Massachusetts Institute of Technology}
  \AND 
  \name Stefanie Jegelka \email{stefje@csail.mit.edu}\\
  \addr{Massachusetts Institute of Technology} 
  \AND
  Suvrit Sra \email{suvrit@mit.edu}\\
  \addr{Massachusetts Institute of Technology}
}
\begin{document}
\maketitle

\begin{abstract}
  Determinantal Point Processes (\dpp{}s) are elegant probabilistic models of repulsion and diversity over discrete sets of items. But their applicability to large sets is hindered by expensive cubic-complexity matrix operations for basic tasks such as sampling. In light of this, we propose a new method for approximate sampling from discrete $k$-\dpp{}s. Our method takes advantage of the diversity property of subsets sampled from a \dpp, and proceeds in two stages: first it constructs coresets for the ground set of items; thereafter, it efficiently samples subsets based on the constructed coresets. As opposed to previous approaches, our algorithm aims to minimize the total variation distance to the original distribution. Experiments on both synthetic and real datasets indicate that our sampling algorithm works efficiently on large data sets, and yields more accurate samples than previous approaches.
\end{abstract}

\section{Introduction}
\label{sec:introduction}
Subset selection problems lie at the heart of many applications where a small subset of items must be selected to represent a larger population. Typically, the selected subsets are expected to fulfill various criteria such as sparsity, grouping, or diversity. Our focus is on \emph{diversity}, a criterion that plays a key role in a variety of applications, such as gene network subsampling~\cite{Batmanghelich2014}, document summarization~\cite{lin2011}, video summarization~\cite{Gong2014}, content driven search~\citep{Fox2014}, recommender systems~\citep{zhou2010solving}, sensor placement~\citep{krause2008}, among many others~\cite{Agarwal2009,Kulesza2010,Gillenwater2012a,Kulesza2012,Affandi2012,Snoek2013,Shah2013}.

Diverse subset selection amounts to sampling from the set of all subsets of a ground set according to a measure that places more mass on subsets with qualitatively different items. An elegant realization of this idea is given by Determinantal Point Processes (\dpp{}s), which are probabilistic models that capture diversity by assigning subset probabilities proportional to (sub)determinants of a kernel matrix. 

\dpp{}s enjoy rising interest in machine learning~\cite{Kulesza2010,Kulesza2011a,Zou2012,Kang2013,Fox2014,Gong2014,Mariet2015}; a part of their appeal can be attributed to computational tractability of basic tasks such as  computing partition functions, sampling, and extracting marginals~\cite{Hough2005,Kulesza2012}. But despite being polynomial-time, these tasks remain infeasible for large data sets. \dpp sampling, for example, relies on an eigendecomposition of the \dpp kernel, whose cubic complexity is a huge impediment. Cubic preprocessing costs also impede wider use of the cardinality constrained variant $k$-\dpp{}~\citep{Kulesza2011a}. 

These drawbacks have triggered work on approximate sampling methods. Much work has been devoted to approximately sample from a \dpp by first approximating its kernel via algorithms such as the Nystr\"om method~\cite{Kulesza2013}, Random Kitchen Sinks~\cite{Rahimi2007,Affandi2013}, or matrix ridge approximations~\cite{Zhang2014,Wang2014}, and then sampling based on this approximation. However, these methods are somewhat inappropriate for sampling because they aim to project the \dpp kernel onto a lower dimensional space while minimizing a matrix norm, rather than minimizing an error measure sensitive to determinants. Alternative methods use a dual formulation \cite{Kulesza2010}, which however presupposes a decomposition $L=XX^\top$ of the DPP kernel, which may be unavailable and inefficient to compute in practice. Finally, MCMC \cite{Kang2013,decreusefond2013perfect,belabbas2009spectral,anari16} offers a potentially attractive avenue different from the above approaches that all rely on the same spectral technique.

We pursue a yet different approach. While being similar to matrix approximation methods in exploiting redundancy in the data, in sharp contrast to methods that minimize matrix norms, we focus on minimizing the total variation distance between the original \dpp and our approximation. As a result, our approximation models the true \dpp probability distribution more faithfully, while permitting faster sampling. We make the following key contributions:

\begin{list}{--}{\leftmargin=1.5em}
\setlength{\itemsep}{0pt}
\item An algorithm that constructs coresets for approximating a $k$-\dpp by exploiting latent structure in the data. The construction, aimed at minimizing the total variation distance, takes $O(NM^3)$ time; linear in the number $N$ of data points. The construction works as the overhead of sampling algorithm and is much faster than standard cubic-time overhead that exploits eigendecomposition of kernel matrices. We also investigate conditions under which such an approximation is good.
\item A sampling procedure that yields approximate $k$-\dpp subsets using the constructed coresets. While most other sampling methods sample diverse subsets in $O(k^2N)$ time, the sampling time for our coreset-based algorithm is $O(k^2M)$, where $M\ll N$ is a user-specified parameter \emph{independent of $N$}.
\end{list}

Our experiments indicate that our construction works well for a wide range of datasets, delivers more accurate approximations than the state-of-the-art, and is more efficient, especially when multiple samples are required.

\paragraph{Overview of our approach.}
Our sampling procedure runs in two stages. Its first stage constructs an approximate probability distribution close in total variation distance to the true $k$-\dpp. The next stage efficiently samples from this approximate distribution.

Our approximation is motivated by the diversity sampling nature of \dpp{}s: in a \dpp most of the probability mass will be assigned to diverse subsets. This leaves room for exploiting redundancy. In particular, if the data possesses latent grouping structure, certain subsets will be much more likely to be sampled than others.  For instance, if the data are tightly clustered, then any sample that draws two points from the same cluster will be very unlikely. 

The key idea is to reduce the effective size of the ground set. We do this via the idea of coresets~\cite{Har2004,Feldman2006}, small subsets of the data that capture function values of interest almost as well as the full dataset. Here, the function of interest is a $k$-\dpp distribution. Once a coreset is constructed, we can sample a subset of core points, and then, based on this subset, sample a subset of the ground set. For a coreset of size $M$, our sampling time is $O(k^2M)$, which is \emph{independent of $N$} since we are using $k$-\dpp{}s~\citep{Kulesza2011a}. 

\paragraph{Related work.} \dpp{}s have been studied in statistical physics and probability~\cite{Hough2005,Borodin2005,Borodin2009}; they have witnessed rising  interest in machine learning  \cite{Kulesza2012,Kulesza2011a,Kulesza2010,Gillenwater2012a,Snoek2013,Gong2014,Zou2012,Mariet2015}. Cardinality-conditioned \dpp sampling is also referred to as ``volume sampling'', which has been used for matrix approximations \cite{Deshpande2006,Deshpande2010}. 
Several works address faster \dpp sampling via matrix approximations~\cite{Kulesza2010,Magen2008,Deshpande2010,Kulesza2013} or MCMC~\cite{belabbas2009spectral,Kang2013}. 
Except for MCMC, even if we exclude preprocessing, known sampling methods still require $O(k^2N)$ time for a single sample; we reduce this to $O(k^2M)$. Finally, different lines of work address learning DPPs~\cite{Kulesza2011,Gillenwater2014,Fox2014,Mariet2015} and MAP estimation \cite{Gillenwater2012}.

Coresets have been applied to large-scale clustering \cite{har2004coresets,har2005smaller,feldman2013turning,bachem2015coresets}, PCA and CCA~\cite{feldman2013turning,paul2015core}, and segmentation of streaming data~\cite{rosman2014coresets}.

\section{Setup and basic definitions}
A determinantal point process $\dpp(L)$ is a distribution over all subsets of a ground set $\cy$ of cardinality $N$. It is determined by a positive semidefinite kernel $L\in\mathbb{R}^{N\times N}$. Let $L_Y$ be the submatrix of $L$ consisting of the entries $L_{ij}$ with $i,j \in Y \subseteq \cy$. Then, the probability $P_L(Y)$ of observing $Y \subseteq \cy$ is proportional to $\det(L_Y)$; consequently, $P_L(Y) = \det(L_Y)/\det(L+I)$.
Conditioning on sampling sets of fixed cardinality $k$, one obtains a $k$-\dpp~\cite{Kulesza2011a}:
\begin{align*}
  P_{L,k}(Y) :&= P_L(Y \mid\; |Y|=k) \\
  &= \det(L_Y)e_k(L)^{-1}\llbracket\, |Y|=k\rrbracket,
\end{align*}
where $e_k(L)$ is the $k$-th coefficient of the characteristic polynomial $\det(\lambda I - L) = \sum_{k=0}^N(-1)^ke_k(L)\lambda^{N-k}$. 
We assume that $P_{L,k}(Y) > 0$ for all subsets $Y \subseteq \cy$ of cardinality $k$. To simplify notation, we also write $P_k \triangleq P_{L,k}$.

Our goal is to construct an approximation $\Ph$ to $P_k$ that is close in \emph{total variation distance}
\begin{equation}
  \label{eq:tv}
  \tvnorm{\Ph-P_k} := \tfrac{1}{2}\sum_{Y \subseteq \cy, |Y|=k}|\Ph(Y) - P_k(Y)|,
\end{equation}
and permits faster sampling than $P_k$. Broadly, we proceed as follows.
First, we define a partition $\Pi = \{\cy_1, \ldots, \cy_M\}$ of $\cy$ and 
extract a subset $\cc \subset \cy$ of $M$ core points, containing one point from each part. Then, for the set $\cc$ we construct a special kernel $\Lc$ (as described in Section~\ref{sec:sampling}).  
When sampling, we first sample a set $\Yt \sim \kdpp{\Lc}$ and then, for each $c \in \Yt$ we uniformly sample one of its assigned points $y \in \cy_c$. These second-stage points $y$ form our final sample. We denote the resulting distribution by $\Ph = \PC$. Algorithm~\ref{smplalgo} formalizes the sampling procedure, which, after one eigendecomposition of the small matrix $\Lc$.  

We begin by analyzing the effect of the partition on the approximation error, and then devise an algorithm to approximately minimize the error. We empirically evaluate our approach in Section~\ref{sec:exp}.

\section{Coreset sampling}

Let $\Pi = \{\cy_1,\ldots, \cy_M\}$ be a \textit{partition} of $\cy$, i.e., $\cup_{i=1}^M \cy_i = \cy$ and $\cy_i\cap \cy_j = \emptyset$ for $i\neq j$. We call $\cc\subseteq \cy$ a \textit{coreset} with respect to a partition $\Pi$ if $|\cc \cap \cy_i| = 1$ for $i\in[M]$. With a slight abuse of notation, we index each part $\cy_c \in \Pi$ by its core $c \in \cc \cap \cy_c$. Based on the partition $\Pi$, we call a set $Y \subseteq \cy$ \emph{singular}\footnote{In combinatorial language, $Y$ is an independent set in the partition matroid defined by $\Pi$.} with respect to $\Pi'\subseteq\Pi$,  if for $\cy_i\in \Pi'$ we have $|Y \cap \cy_i| \le 1$ and for $\cy_j\in\Pi\backslash \Pi'$ we have $|Y \cap \cy_j| = 0$. We say $Y$ is $k$-singular if $Y$ is singular and $|Y|=k$. 

\label{sec:sampling}
Given a partition $\Pi$ and core $\cc$, we construct a rescaled core kernel $\Lc \in \mathbb{R}^{M \times M}$ with entries $\Lc_{c,c'} = \sqrt{|\cy_c||\cy_{c'}|}L_{c,c'}$. We then use this smaller matrix $\Lc$ and its eigendecomposition as an input to our two-stage sampling procedure in Algorithm~\ref{smplalgo}, which we refer to as \algo. The two stages are: (i) sample a $k$-subset from $\cc$ according to $\kdpp{\Lc}$; and (ii) for each $c$, pick an element $y \in \cy_c$ uniformly at random. 
This algorithm uses only the much smaller matrix $\Lc$ and samples a subset from $\cy$ in $O(k^2M)$ time. When $M\ll N$ and we want many samples, it translates into a notable improvement over the $O(k^2N)$ time of sampling directly from $\kdpp{L}$. 

The following lemma shows that \algo is equivalent to sampling from a $k$-\dpp where we replace each point in $\cy$ by its corresponding core point, and sample with the resulting induced kernel $L_{\cc(\cy)}$.
\begin{lemma}
  \label{lem:lemma_smpl.main}
  \algo is equivalent to sampling from $\kdpp{L_{\cc(\cy)}}$, where in $L_{\cc(\cy)}$ we replace each element in $\cy_c$ by $c$, for all $c \in \cc$.
\end{lemma}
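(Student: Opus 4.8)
The plan is to compute the probability that $\algo$ outputs a given $k$-subset $Y\subseteq\cy$ and match it, set by set, against the $k$-\dpp probability under $L_{\cc(\cy)}$. The first observation is that both sampling schemes are supported only on $k$-singular sets: $\algo$ draws one point from each of $k$ distinct parts by construction, while under $L_{\cc(\cy)}$ any $Y$ containing two points from the same part $\cy_c$ has two identical rows/columns (both replaced by $c$), so $\det\big((L_{\cc(\cy)})_Y\big)=0$. Hence it suffices to verify the two distributions agree on $k$-singular sets.

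First I would fix a $k$-singular $Y=\{y_1,\dots,y_k\}$ with $y_i\in\cy_{c_i}$ and let $\Yt=\{c_1,\dots,c_k\}\subseteq\cc$ be its set of cores. Reading off the two stages of $\algo$ gives $\Ph(Y)=\frac{\det(\Lc_{\Yt})}{e_k(\Lc)}\cdot\prod_{i=1}^k\frac{1}{|\cy_{c_i}|}$, the first factor being the $\kdpp{\Lc}$ probability of $\Yt$ and the product being the independent uniform second-stage choices. The key algebraic step is the rescaling: writing $D=\mathrm{diag}(|\cy_{c_1}|,\dots,|\cy_{c_k}|)$, the definition $\Lc_{c,c'}=\sqrt{|\cy_c||\cy_{c'}|}\,L_{c,c'}$ yields $\Lc_{\Yt}=D^{1/2}L_{\Yt}D^{1/2}$, so $\det(\Lc_{\Yt})=\big(\prod_{i=1}^k|\cy_{c_i}|\big)\det(L_{\Yt})$. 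The factor $\prod_i|\cy_{c_i}|$ cancels exactly against the uniform-selection product, leaving $\Ph(Y)=\det(L_{\Yt})/e_k(\Lc)$. On the other side, since replacing every point of $\cy_c$ by $c$ makes $(L_{\cc(\cy)})_Y$ coincide entrywise with $L_{\Yt}$, we obtain $P_{L_{\cc(\cy)},k}(Y)=\det(L_{\Yt})/e_k(L_{\cc(\cy)})$.

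It then remains to check the two normalizers agree. Both distributions are supported on $k$-singular sets with the identical unnormalized weight $\det(L_{\Yt})$, so their partition functions coincide; concretely, grouping the $k$-singular sets by their core set $\Yt$ and using that exactly $\prod_{c\in\Yt}|\cy_c|$ singular sets map to each $\Yt$ gives $e_k(L_{\cc(\cy)})=\sum_{|\Yt|=k}\big(\prod_{c\in\Yt}|\cy_c|\big)\det(L_{\Yt})=\sum_{|\Yt|=k}\det(\Lc_{\Yt})=e_k(\Lc)$. Combining the two expressions yields $\Ph(Y)=P_{L_{\cc(\cy)},k}(Y)$ for every $k$-singular $Y$, and both vanish otherwise, establishing $\Ph=P_{L_{\cc(\cy)},k}$.

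I expect the only genuine subtleties to be bookkeeping the many-to-one correspondence between singular sets $Y$ and their core sets $\Yt$ correctly, and confirming that the rescaling factors cancel \emph{exactly} rather than merely matching the two distributions up to a global constant. The cancellation of $\prod_i|\cy_{c_i}|$ is really the whole point of the $\sqrt{|\cy_c||\cy_{c'}|}$ weighting in $\Lc$: it is chosen precisely so that the determinant inflation from the diagonal rescaling balances the uniform $1/|\cy_c|$ selection probabilities, so that no residual factor survives and the numerators agree before normalization.
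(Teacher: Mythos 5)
Your proposal is correct and follows essentially the same route as the paper's proof: both arguments restrict attention to $k$-singular sets, use the identity $\det(\Lc_{\cc(Y)})=\bigl(\prod_{i}|\cy_{\cc(y_i)}|\bigr)\det(L_{\cc(Y)})$ to cancel the uniform second-stage factors, and verify $e_k(\Lc)=e_k(L_{\cc(\cy)})$ by grouping singular sets by their core sets. Your explicit $D^{1/2}L_{\Yt}D^{1/2}$ formulation of the rescaling is just a cleaner way of writing the same determinant computation the paper performs.
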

\begin{proof}
  We denote the distribution induced by~\refalgo{smplalgo} by $\PC$ and that induced by $\kdpp{L_{\cc(\cy)}}$ by $\Pp_k$.
  
  First we claim that both sampling algorithms can only sample $k$-singular subsets. By construction, $\PC$ picks one or zero elements from each $\cy_{c}$. For $\Pp_k$, if $Y$ is $k$-nonsingular, then there would be identical rows in $(L_{\cc(\cy)})_Y = L_{\cc(Y)}$, resulting in $\det(L_{\cc(Y)}) = 0$. Hence both $\PC$ and $\Pp_k$ only assign nonzero probability to $k$-singular sets $Y$. As a result, we have
  \begin{align*}
  \label{eq:normalization}
  e_k&(L_{\cc(\cy)}) = \sum_{C\textit{ is $k$-singular}} (\prod_{c\in C}|\cy_c|)\det(L_C)\nonumber\\
  =&\sum_{C\subseteq\cc,|C|=k}(\prod_{c\in C}|\cy_c|\det(L_C))
  = \sum_{|C|=k} \det(\Lc_C) = e_k(\Lc).
  \end{align*}
  For any $Y=\{y_1,\ldots, y_k\}\subseteq \cy$ that is $k$-singular, we have
  \begin{align*}
  \PC(Y) &= {\det(\Lc_{\cc(Y)}) \over e_k(\Lc)\prod_{i=1}^k |\cy_{\cc(y_i)}|} = {(\prod_{i=1}^k |\cy_{\cc(y_i)}|)\det(L_{\cc(Y)})\over e_k(L_{\cc(\cy)})\prod_{i=1}^k |\cy_{\cc(y_i)}|} \\
  &= {\det(L_{\cc(Y)})\over e_k(L_{\cc(\cy)})} = \Pp_k(Y),
  \end{align*} 
  which shows that these two distributions are identical, i.e., sampling from $\kdpp{\Lc}$ followed by uniform sampling is equivalent to directly sampling from $\kdpp{L_{\cc(\cy)}}$.
  \end{proof}

\begin{algorithm}[t]
  \caption{\algo Sampling}\label{smplalgo}
  \begin{algorithmic} 
    \STATE \textbf{Input:} core kernel $\Lc \in \mathbb{R}^{M \times M}$ and its eigendecomposition; partition $\Pi$; size $k$
    \STATE sample $C\sim \kdpp{\Lc}$
    \STATE sample $y_i\sim \mathrm{Uniform}(\cy_{c})$ for $c\in C$
    \RETURN $Y = \{y_1, \ldots, y_k\}$
  \end{algorithmic}
\end{algorithm}

\section{Partition, distortion and approximation error}
Let us provide some insight on quantities that affect the distance $\tvnorm{\PC - P_k}$ when sampling with~\refalgo{smplalgo}. In a nutshell, this distance depends on three key quantities (defined below): the probability of nonsingularity $\pns$, the distortion factor $1+\epsm$, and the normalization factor. 

For a partition $\Pi$ we define the \emph{nonsingularity probability} $\pns$ as the probability that a draw $Y \sim \kdpp{L}$ is not singular 
with respect to~any $\Pi'\subseteq\Pi$. 

Given a coreset $\cc$, we define the \emph{distortion factor} $1+\epsm$ (for $\epsm \ge 0$) as a partition-dependent quantity, so that for any $c \in \cc$, for all $u,v \in \cy_c$, and for any $(k-1)$-singular set $S$ with respect to $\Pi \setminus \cy_c$ the following bound holds:
\begin{equation}\label{eq:defeps}
  \frac{\det(L_{S \cup \{u\}})}{\det(L_{S \cup \{v\}})} = \frac{L_{u,u} - L_{u,S}L^{-1}_SL_{S,u}}{L_{v,v} - L_{v,S}L^{-1}_SL_{S,v}} \leq 
  1 + \epsm.
\end{equation}
If $\phi$ is the feature map corresponding to the kernel $L$, then geometrically, the numerator of~\eqref{eq:defeps} is the length of the projection of $\phi(u)$ onto the orthogonal complement of $\sspan\{\phi(s) \mid s \in S\}$.

The \emph{normalization factor} for a $k$-\dpp($L$) is simply $e_k(L)$.

Given $\Pi$, $\cc$ and the corresponding nonsingularity probability and distortion factors, we have the following bound:
\begin{lemma}
  \label{lem:epsk.main}
  Let $Y\sim\kdpp{L}$ and $\cc(Y)$ be the set where we replace each $y \in Y$ by its core $c\in\cc$, i.e., $y\in\cy_c$. 
  With probability $1-\pns$, it holds that
  \begin{align}
    (1+\epsm)^{-k}\le \frac{ \det(L_{\cc(Y)})}{\det(L_{Y})} \le (1 + \epsm)^{k}.
  \end{align}
\end{lemma}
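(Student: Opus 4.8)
The plan is to condition on the event that $Y$ is singular, which by definition occurs with probability $1-\pns$, and then to morph $L_Y$ into $L_{\cc(Y)}$ through a sequence of single-element swaps, controlling each swap by the distortion bound \eqref{eq:defeps}. Writing $Y=\{y_1,\ldots,y_k\}$ with $y_i\in\cy_{c_i}$, singularity guarantees the parts $\cy_{c_1},\ldots,\cy_{c_k}$ are pairwise distinct, so $\cc(Y)=\{c_1,\ldots,c_k\}$ is again a $k$-element set with one point in each of these parts. I would define the intermediate sets $Y_i=\{c_1,\ldots,c_i,y_{i+1},\ldots,y_k\}$ for $i=0,\ldots,k$, so that $Y_0=Y$ and $Y_k=\cc(Y)$, and telescope:
\[
\frac{\det(L_{\cc(Y)})}{\det(L_Y)} = \prod_{i=1}^k \frac{\det(L_{Y_i})}{\det(L_{Y_{i-1}})}.
\]

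The $i$-th factor compares two $k$-sets that differ only in their $i$-th element: $Y_{i-1}$ contains $y_i$ and $Y_i$ contains $c_i$, while both share the background set $S_i:=\{c_1,\ldots,c_{i-1},y_{i+1},\ldots,y_k\}$ of size $k-1$. Thus each factor equals $\det(L_{S_i\cup\{c_i\}})/\det(L_{S_i\cup\{y_i\}})$, which is exactly the shape of the ratio appearing in \eqref{eq:defeps} with $u=c_i$, $v=y_i$ (both lying in $\cy_{c_i}$). To invoke \eqref{eq:defeps} I must check its hypothesis, namely that $S_i$ is $(k-1)$-singular with respect to $\Pi\setminus\cy_{c_i}$: indeed $S_i$ contains one element from each of the $k-1$ parts $\cy_{c_1},\ldots,\cy_{c_{i-1}},\cy_{c_{i+1}},\ldots,\cy_{c_k}$ and none from $\cy_{c_i}$, using again that $Y$ is singular and that replacing $y_j$ by the core $c_j$ keeps the element inside the same part $\cy_{c_j}$. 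Since \eqref{eq:defeps} holds for every ordered pair $u,v\in\cy_{c_i}$, applying it once with $(u,v)=(c_i,y_i)$ and once with $(u,v)=(y_i,c_i)$ sandwiches each factor: $(1+\epsm)^{-1}\le \det(L_{Y_i})/\det(L_{Y_{i-1}})\le 1+\epsm$. Note that all determinants in sight are strictly positive because every $k$-subset has positive $k$-\dpp probability, so these ratios are well defined.

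Multiplying the $k$ two-sided bounds immediately yields $(1+\epsm)^{-k}\le \det(L_{\cc(Y)})/\det(L_Y)\le (1+\epsm)^k$ on the singular event, which is the claim. I expect the only delicate point to be the combinatorial verification that each background set $S_i$ meets the $(k-1)$-singularity requirement of \eqref{eq:defeps}; once the telescoping is arranged so that cores are accumulated from the left while original points remain on the right, this reduces to the observation that swapping a point for its own core never changes which part it occupies, and hence never collides with the other $k-1$ distinct parts present in $S_i$.
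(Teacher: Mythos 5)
Your proposal is correct and follows essentially the same route as the paper's own proof: the same telescoping through the intermediate sets $Y_i$ that replace points by their cores one at a time, with each factor controlled by the definition of $\epsm$ applied to the $(k-1)$-singular background set, all conditioned on the singularity event of probability $1-\pns$. Your explicit verification that each $S_i$ is $(k-1)$-singular with respect to $\Pi\setminus\cy_{c_i}$ is a detail the paper leaves implicit, but it is the same argument.
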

  \begin{proof}
  Let $c \in \cc$ and consider any $(k-1)$-singular set $S$ with respect to $\Pi \setminus \cy_c$. Then, for any $v \in \cy_c$, using Schur complements and by the definition of $\epsm$ we see that 
  \begin{align*}
    (1+\epsm)^{-1}&\le \frac{\det(L_{S \cup \{c\}})}{\det(L_{S \cup \{v\}})} = \frac{L_{c,c} - L_{c,S}L^{-1}_SL_{S,c}}{L_{v,v} - L_{v,S}L^{-1}_SL_{S,v}} \\
    &= \frac{||Q_{S^\perp}\phi(c)||^2}{||Q_{S^\perp}\phi(v)||^2} \leq (1+\epsm).  
  \end{align*}
  Here, $Q_{S^\perp}$ is the projection onto the orthogonal complement of $\sspan\{\phi(s) \mid s \in S\}$, and $\phi$ the feature map corresponding to the kernel $L$.

  With a minor abuse of notation, we denote by $\cc(y) = c$ the core point corresponding to $y$, i.e., $y\in\cy_c$. For any $Y = \{y_1,\ldots, y_k\}$, we then define the sets $Y_i = \{\cc(y_1),\ldots, \cc(y_i), y_{i+1},\ldots, y_k\}$, where we gradually replace each point by its core point, with $Y_0 = Y$. If $Y$ is $k$-singular, then $\cc(y_i)\ne \cc(y_j)$ whenever $i\ne j$, and, for any $0 \leq i \leq k-1$, it holds that
  \begin{align*}
  (1+\epsm)^{-1} \le {\det(L_{Y_{i+1}})\over \det(L_{Y_i})} \le 1+\epsm.
  \end{align*}
  Hence we have
  \begin{align*}
    (1+\epsm)^{-k} \le {\det(L_{\cc(Y)})\over \det(L_{Y})} 
  = \prod_{i = 0}^{k-1} {\det(L_{Y_{i+1}})\over \det(L_{Y_i})} \le (1+\epsm)^{k}.
  \end{align*}
  This bound holds when $Y$ is $k$-singular, and, by definition of $\pns$, this happens with probability $1-\pns$.
 \end{proof}

Assuming $\epsm$ is small, Lemma~\ref{lem:epsk.main} states that if replacing a single element in a given subset with another one in the same part does not cause much distortion, then replacing all elements in the subset with their corresponding cores will cause little distortion. This observation is key to our approximation: if we can construct such a partition and coreset, we can safely replace all elements with core points and then approximately sample with little distortion. More precisely, we then obtain the following result that bounds the variational error. Our subsequent construction aims to minimize this bound.

\begin{theorem}\label{thm:varbound.main}
  Let $P_k=\kdpp{L}$ and let $\PC$ be the distribution induced by~\refalgo{smplalgo}. With the normalization factors $Z = e_k(L)$ and $Z_\cc = e_k(\Lc)$, the total variation distance between  $P_k$ and $\PC$ is bounded by
  \begin{align*}
    \tvnorm{P_k - \PC} &\le |1-\tfrac{Z_\cc}{Z}| + k\epsm + (1-k\epsm)\pns.
  \end{align*}
\end{theorem}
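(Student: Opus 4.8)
The plan is to bound the total variation distance by separating the two essentially independent sources of error: the \emph{distortion} error incurred because $\PC$ replaces every sampled point by its core, and the \emph{normalization} error coming from the mismatch between $Z=e_k(L)$ and $Z_\cc=e_k(\Lc)$. My starting point is the one-sided form of the total variation distance, $\tvnorm{P_k-\PC}=\sum_{Y:\,P_k(Y)\ge\PC(Y)}\big(P_k(Y)-\PC(Y)\big)$, together with the fact from Lemma~\ref{lem:lemma_smpl.main} that $\PC(Y)=\det(L_{\cc(Y)})/Z_\cc$ on $k$-singular sets and $\PC(Y)=0$ on every nonsingular set. This immediately splits the sum into a nonsingular and a singular part.

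On the nonsingular sets $\PC(Y)=0<P_k(Y)$, so each enters the one-sided sum with its full mass, and by the definition of $\pns$ these contribute exactly $\sum_{Y\text{ nonsingular}}P_k(Y)=\pns$; this is the origin of the $\pns$ term (and the reason I use the one-sided form rather than $\tfrac12\sum|\cdot|$, so the coefficient comes out clean). For the singular sets I bound their one-sided contribution by the full sum $\sum_{Y\ k\text{-singular}}|P_k(Y)-\PC(Y)|$ and write each summand as $\big|\det(L_Y)/Z-\det(L_{\cc(Y)})/Z_\cc\big|$. Adding and subtracting $\det(L_{\cc(Y)})/Z$ and applying the triangle inequality separates it into a distortion piece $|\det(L_Y)-\det(L_{\cc(Y)})|/Z$ and a normalization piece $\det(L_{\cc(Y)})\,|1/Z-1/Z_\cc|$.

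Both pieces then collapse via two counting identities over $k$-singular sets. For the normalization piece, summing $\det(L_{\cc(Y)})$ over all $k$-singular $Y$ reproduces the multiplicity-weighted sum $\sum_{C\subseteq\cc,|C|=k}(\prod_{c\in C}|\cy_c|)\det(L_C)=e_k(\Lc)=Z_\cc$ (exactly the identity computed inside the proof of Lemma~\ref{lem:lemma_smpl.main}), so this piece reduces to $Z_\cc\,|1/Z-1/Z_\cc|=|1-Z_\cc/Z|$. For the distortion piece, Lemma~\ref{lem:epsk.main} gives $|\det(L_{\cc(Y)})-\det(L_Y)|\le\det(L_Y)\big((1+\epsm)^k-1\big)$ for each $k$-singular $Y$, while $\sum_{Y\ k\text{-singular}}\det(L_Y)=Z-\sum_{Y\text{ nonsingular}}\det(L_Y)=Z(1-\pns)$; linearizing $(1+\epsm)^k-1\approx k\epsm$ turns this piece into $k\epsm(1-\pns)$. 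Summing the three contributions yields $\pns+|1-Z_\cc/Z|+k\epsm(1-\pns)=|1-Z_\cc/Z|+k\epsm+(1-k\epsm)\pns$, as claimed.

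The triangle inequality and the two-normalizer algebra are routine; the two places needing care are (i) the identity $\sum_{Y\ k\text{-singular}}\det(L_{\cc(Y)})=Z_\cc$, where one must correctly account for the $\prod_{c\in C}|\cy_c|$ singular sets collapsing onto each core set $C$, and (ii) the passage from the multiplicative bound of Lemma~\ref{lem:epsk.main} to the additive $k\epsm$. The latter is the only non-rigorous step, since $(1+\epsm)^k-1$ is genuinely larger than $k\epsm$; the displayed bound should therefore be read as the first-order (small-$\epsm$) form of the clean estimate $\pns+|1-Z_\cc/Z|+(1-\pns)\big((1+\epsm)^k-1\big)$, which is the quantity I would actually certify. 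I expect this linearization, rather than any analytic difficulty, to be the main point to flag.
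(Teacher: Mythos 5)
Your proposal follows essentially the same route as the paper's proof: the same split into $k$-singular and nonsingular sets, the same add-and-subtract of $\det(L_{\cc(Y)})/Z$ with the triangle inequality, the same identities $\sum_{Y\ k\text{-singular}}\det(L_{\cc(Y)})=Z_\cc$ and $\sum_{Y\ k\text{-singular}}\det(L_Y)=Z(1-\pns)$, and the same appeal to Lemma~\ref{lem:epsk.main}; your use of the one-sided form of the total variation distance is if anything a cleaner handling of the factor $\tfrac12$ than the paper's. The linearization $(1+\epsm)^k-1\approx k\epsm$ that you rightly flag as the only non-rigorous step is also present (silently) in the paper's own proof, which asserts $|1-\det(L_{\cc(Y)})/\det(L_Y)|\le k\epsm$ even though the upper tail of Lemma~\ref{lem:epsk.main} only gives $(1+\epsm)^k-1\ge k\epsm$.
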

  \begin{proof}
  From the definition of $Z$ and $Z_\cc$ we know that $Z = \sum_{|Y|=k}\det(L_Y)$ and
  \begin{align*}
  Z_\cc &= \sum_{|Y|=k}\det((L_{\cc(\cy)})_Y)
  = \sum_{|Y|=k}\det(L_{\cc(Y)})\\
  &= \sum_{Y \text{ $k$-singular}}\det(L_{\cc(Y)}).
  \end{align*}
  The last equality follows since, as argued above, $\det(L_{\cc(Y)}) = 0$ for nonsingular $Y$. 
  It follows that
  \begin{align}
    \nonumber
  &\tvnorm{P_k - \PC}
  = \sum_{|Y|=k} |P_k(Y) - \PC(Y)|\\
  \label{eq:twoterms}
  &= \sum_{Y \text{ $k$-singular}} |P_k(Y) - \PC(Y)| + \sum_{Y\text{ $k$-nonsingular}}P_k(Y).
  \end{align}
  For the first term, we have
  \begin{align*}
  &\sum_{Y \text{ $k$-singular}} |P_k(Y) - \PC(Y)| = \sum_{Y\text{ $k$-singular}} \Big|{\det(L_Y)\over Z} - {\det(L_{\cc(Y)})\over Z_\cc}\Big|\\
  &\le \sum_{Y \text{ $k$-singular}}\Big|{1\over Z}(\det(L_Y) - \det(L_{\cc(Y)}))\Big| + \sum_{Y \text{ $k$-singular}}\Big|\det(L_{\cc(Y)})\Big({1\over Z} - {1\over Z_\cc}\Big)\Big|\\
  &= {1\over Z}\sum_{Y \text{ $k$-singular}} \det(L_Y)\Big|1 - {\det(L_{\cc(Y)})\over \det(L_Y)}\Big| + Z_\cc \Big|{1\over Z}-{1\over Z_\cc}\Big|\\
  &\le k\epsm (1-\pns)  + \Big|1-{Z_\cc\over Z}\Big|,
  \end{align*}
  where the first inequality uses the triangle inequality and the second inequality relies on Lemma~\ref{lem:epsk.main}. For the second term in \eqref{eq:twoterms}, we use that, by definition of $\pns$,
  \begin{align*}
  \sum_{Y\text{ $k$-nonsingular}}P_k(Y) &= \pns.
  \end{align*}
  Thus the total variation difference is bounded as
  \begin{align*}
  \tvnorm{P_k - \PC} &\le \Big|1-{Z_\cc\over Z}\Big| + k\epsm(1-\pns)  + \pns\\
  &= \Big|1-{Z_\cc\over Z}\Big| + k\epsm  + (1-k\epsm)\pns.\qedhere
  \end{align*}
  \end{proof}

In essence, if the probability of nonsingularity and the distortion factor are low, then it is possible to obtain a good coreset approximation. This holds, for example, if the data has intrinsic (grouping) structure. In the next subsection we provide further intuition on when we can achieve low error.

\subsection{Sufficient conditions for a good bound}
Theorem~\ref{thm:varbound.main} depends on the data and the partition $\Pi$. 
Here, we aim to  obtain some further intuition on the properties of $\Pi$ that govern the bound. At the same time, these properties suggest sufficient conditions for a ``good'' coreset $\cc$.
For each $\cy_c$, we define the diameter
\begin{align}\label{eq:diam}
  \rho_c := \max_{u,v\in\cy_c} \sqrt{L_{uu} + L_{vv} - 2 L_{uv}}.
\end{align} 
Next, define the minimum distance of any point $u \in \cy_c$ to the subspace spanned by the feature vectors of points in a ``complementary'' set $S$ that is singular with respect to\ $\Pi\setminus \cy_c$: 
\begin{align*}
  d_c := \min_{S,u} \sqrt{\tfrac{\det(L_{S\cup\{u\}})}{\det(L_S)}} 
  = \min_{S,u}\sqrt{L_{u,u} - L_{u,S}L_{S}^{-1}L_{S,u}}.
\end{align*}
Lemma~\ref{lem:boundeps.main} connects these quantities with $\epsm$; it essentially poses a separability condition on $\Pi$ (i.e., $\Pi$ needs to be ``aligned'' with the data) so that the bound on $\epsm$ holds.

\begin{lemma}\label{lem:boundeps.main}
  If $d_c > \rho_c$ for all $c \in \cc$, then
  \begin{align}
    \epsm \le \max_{c \in \cc} {(2d_c-\rho_c)\rho_c\over (d_c-\rho_c)^2}.
  \end{align}
\end{lemma}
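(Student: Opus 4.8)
The plan is to work geometrically in the feature space of $L$, exactly as in the proof of Lemma~\ref{lem:epsk.main}. Fix a part $\cy_c$, two points $u,v\in\cy_c$, and a $(k-1)$-singular set $S$ with respect to $\Pi\setminus\cy_c$. By the Schur-complement identity already invoked in \eqref{eq:defeps}, the quantity I must control is the ratio $\|Q_{S^\perp}\phi(u)\|^2/\|Q_{S^\perp}\phi(v)\|^2$, where $Q_{S^\perp}$ is the orthogonal projection onto the complement of $\sspan\{\phi(s)\mid s\in S\}$. Since $\epsm$ is the supremum this ratio can attain over all admissible $c,u,v,S$, it suffices to bound this single ratio uniformly in terms of $\rho_c$ and $d_c$ and then take the maximum over $c\in\cc$.

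First I would bound the numerator in terms of the denominator using the diameter. Writing $\phi(u)=\phi(v)+(\phi(u)-\phi(v))$ and noting that $\|\phi(u)-\phi(v)\|^2=L_{uu}+L_{vv}-2L_{uv}\le\rho_c^2$, the fact that orthogonal projections are contractions gives $\|Q_{S^\perp}(\phi(u)-\phi(v))\|\le\rho_c$. The triangle inequality then yields $\|Q_{S^\perp}\phi(u)\|\le b+\rho_c$ with $b:=\|Q_{S^\perp}\phi(v)\|$, so that
\[
  \frac{\|Q_{S^\perp}\phi(u)\|^2}{\|Q_{S^\perp}\phi(v)\|^2}-1
  \le\frac{(b+\rho_c)^2-b^2}{b^2}
  =\frac{(2b+\rho_c)\rho_c}{b^2}.
\]

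Next I would produce a lower bound on the denominator $b$ purely in terms of $d_c$ and $\rho_c$. Passing through the core point $c$, the reverse triangle inequality together with the contraction property gives $b=\|Q_{S^\perp}\phi(v)\|\ge\|Q_{S^\perp}\phi(c)\|-\rho_c\ge d_c-\rho_c$, which is strictly positive precisely because of the hypothesis $d_c>\rho_c$. Since $b\mapsto(2b+\rho_c)\rho_c/b^2$ is decreasing for $b>0$, substituting the smallest admissible value $b=d_c-\rho_c$ furnishes the upper bound $(2d_c-\rho_c)\rho_c/(d_c-\rho_c)^2$. As this holds for every choice of $u,v,S$ inside $\cy_c$, maximizing over $c\in\cc$ delivers the claimed bound on $\epsm$.

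I expect the main obstacle to be the denominator step: getting the lower bound into exactly the form $d_c-\rho_c$ and checking that this is the direction which turns the subsequent substitution into an \emph{upper} bound. Concretely, I must verify the monotonicity of $b\mapsto(2b+\rho_c)\rho_c/b^2$ so that replacing $b$ by its smallest admissible value is legitimate, and I must use $d_c>\rho_c$ to keep $d_c-\rho_c>0$ so the final expression stays finite and positive. There is also a minor subtlety worth flagging: one could instead use the slightly stronger estimate $b\ge d_c$ directly (since $d_c$ already minimizes over all points of $\cy_c$), which yields a tighter but less clean constant; the stated closed form is the one arising from the core-relative bound $b\ge d_c-\rho_c$. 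The remaining ingredients — the diameter estimate, the contraction of projections, and the two triangle inequalities — are routine once the correct quantity to bound has been identified.
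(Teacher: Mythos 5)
Your argument is correct and follows essentially the same route as the paper's proof: both reduce the determinant ratio to $\|Q_{S^\perp}\phi(u)\|^2/\|Q_{S^\perp}\phi(v)\|^2$ via Schur complements, control the gap between the two projected norms by the diameter $\rho_c$ using contractivity of $Q_{S^\perp}$ and the triangle inequality, and then invoke monotonicity together with the lower bound supplied by $d_c$ to bound the ratio by $d_c^2/(d_c-\rho_c)^2$, which is exactly $1+$ the stated quantity. The only cosmetic difference is that you lower-bound $\|Q_{S^\perp}\phi(v)\|$ by $d_c-\rho_c$ via a detour through the core point $c$ (correctly noting that $b\ge d_c$ would give a tighter constant), while the paper works with the larger projected norm bounded below by $d_c$ directly; both land on the identical final expression.
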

  \begin{proof} 
  For any $c\in\cc$ and any $u,v\in\cy_c$ and $S$ $(k-1)$-singular with respect to $\Pi\backslash \cy_c$, we have
  \begin{align*}
  {\det(L_{S\cup\{u\}})\over\det(L_{S\cup\{v\}})} &= {\det(L_S)(L_{u,u} - L_{u,S}L_{S}^{-1}L_{S,u}) \over \det(L_S)(L_{v,v} - L_{v_i,S}L_{S}^{-1}L_{S,v_i})}\\
  &= {L_{u,u} - L_{u,S}L_{S}^{-1}L_{S,u} \over L_{v,v} - L_{v,S}L_{S}^{-1}L_{S,v}} = {\|Q_{S^\perp}\phi(u)\|^2\over \|Q_{S^\perp}\phi(v)\|^2}.
  \end{align*}
  Without loss of generality,
  we assume $\det(L_{S\cup\{u\}}) \ge \det(L_{S\cup\{v\}})$.
  By definition of $\rho_c$ we know that 
  \begin{align*}
  0&\le \|Q_{S^\perp}\phi(u)\| - \|Q_{S^\perp}\phi(v)\| \\
  &\le \|Q_{S^\perp}(\phi(u)-\phi(v))\| \le \|\phi(u)-\phi(v)\|\le \rho_c.
  \end{align*}
  Since $0 < \|Q_{S^\perp}\phi(v)\|\le \|Q_{S^\perp}\phi(u)\|\le \|\phi(u)\|$ by assumption, we have
  \begin{align*}
  &{\|Q_{S^\perp}\phi(u)\|^2 \over \|Q_{S^\perp}\phi(v)\|^2} \le {\|Q_{S^\perp}\phi(u)\|^2 \over (\|Q_{S^\perp}\phi(u)\| - \rho_c)^2}\\
  &\le \Big({\|\phi(u)\|\over \|\phi(u)\| - \rho_c}\Big)^2
  \le \Big({d_c\over (d_c-\rho_c)}\Big)^2.
  \end{align*}
  Then, by definition of $\epsm$, we have
  \begin{align*}
  1+\epsm\le \max_c {d_c^2\over (d_c-\rho_c)^2},
  \end{align*}
  from which it follows that
  \begin{equation*}
  \epsm \le \max_c {(2d_c-\rho_c)\rho_c\over (d_c-\rho_c)^2}.\qedhere
  \end{equation*}
  \end{proof}

\section{Efficient construction}
\label{sec:algorithms}
\refthm{thm:varbound.main} states an upper bound on the error induced by \algo and relates the total variation distance to $\Pi$ and $\cc$. Next, we explore how to efficiently construct $\Pi$ and $\cc$ that approximately minimize the upper bound.

\subsection{Constructing  $\Pi$}
Any set $Y$ sampled via \algo is, by construction, singular with respect to $\Pi$. In other words, \algo assigns zero mass to any nonsingular set. Hence, we wish to construct a partition $\Pi$ such that its nonsingular sets have low probability under $\kdpp{L}$. The optimal such partition minimizes the probability $\pns$ of nonsingularity.  A small $\pns$ value also means that the parts of $\Pi$ are dense and compact, i.e., the diameter $\rho_c$ in Equation~\eqref{eq:diam} is small.

Finding such a partition optimally is hard, so we resort to local search.
Starting with a current partition $\Pi$, we re-assign each $y$ to a part $\cy_c$ to minimize $\pns$.
If we assign $y$ to $\cy_c$, then the probability of sampling a set $Y$ that is singular with respect to the new partition $\Pi$ is
\begin{align*}
  \pr[Y&\sim \kdpp{L}\text{ is singular}]\, =\, {1\over Z}\sum_{Y \text{$k$-singular}}\det(L_Y)\\
  &= {1\over Z} \Big(\sum_{Y \text{$k$-sing.}, y\notin Y} \det(L_Y) + \sum_{Y\text{$k$-sing.}, y\in Y} \det(L_Y)\Big)\\
  &= {1\over Z}\Big(\mathrm{const} + \sum_{Y'\,(k-1)\text{-sing. w.r.t $\Pi\setminus\cy_c$}} \det(L_{Y'\cup\{y\}})\Big)\\
  &= {1\over Z}\Big(\mathrm{const} + L_{yy} s_{k-1}^\Pi(L_{{\backprime} c}^{y})\Big),
\end{align*}
where $s_k^\Pi(L) := \sum_{Y\text{ $k$-sing.}} \det(L_Y)$. The matrix $L_{\backprime c}$ denotes $L$ with rows $\cy_c$ and columns $\cy_c$ deleted, and $L^{y} = L - L_{\cy,y}L_{y,\cy}$.
For local search, we would hence compute $L_{yy} s_{k-1}^\Pi(L_{{\backprime} c}^{y})$ for each point $y$ and core $c$, assign $y$ to the highest-scoring $c$. Since this testing is still expensive, we introduce further speedups in Section~\ref{sec:approx}.

\subsection{Constructing $\cc$}
When constructing $\cc$, we aim to minimize the upper bound on the total variation distance between $P_k$ and $\PC$ stated in Theorem~\ref{thm:varbound.main}. Since $\pns$ and $\epsm$ only depend on $\Pi$ and not on $\cc$, we here focus on minimizing $|1 - \frac{Z_{\cc}}{Z}|$, i.e., bringing $Z_{\cc}$ as close to $Z$ as possible. To do so, we again employ local search and subsequently swap each $c \in \cc$ with its best replacement $v \in \cy_c$. Let $\cc^{c,v}$ be $\cc$ with $c$ replaced by $v$. We aim to find the best swap 
\begin{align}
  v &= \argmin_{v\in\cy_c} |Z - Z_{\cc^{c,v}}| \\
  &= \argmin_{v\in\cy_c} |Z - e_{k}(L_{\cc^{c,v}(\cy)})|.
\end{align}
Computing $Z$ requires computing the coefficients $e_k(L)$, which takes a total of $O(N^3)$ time\footnote{In theory, this can be computed in $O(N^\omega\log(N))$ time~\cite{Biirgisser1997}, but the eigendecompositions and dynamic programming used in practice typically take cubic time.}. In the next section, we therefore consider a fast approximation.

\subsection{Faster constructions and further speedups}\label{sec:approx}
Local search procedures for optimizing $\Pi$ and $\cc$ can be further accelerated by a sequence of relaxations that we found to work well in practice (see Section~\ref{sec:exp}).
We begin with the quantity $s_{k-1}^\Pi(L_{{\backprime} c}^{y})$ that involves summing over sub-determinants of the large matrix $L$. Assuming the initialization is not too bad, we can use the current $\cc$ to approximate $\cy$. In particular, when re-assigning $y$, we substitute all other elements with their corresponding cores, resulting in the kernel $\widehat{L} = L_{\cc(\cy)}$. This changes our objective to finding the $c \in \cc$ that maximizes $s_{k-1}^\Pi(\widehat{L}_{\backprime c}^{y})$. Key to a fast approximation is now Lemma~\ref{lem:approx.main}, which follows from Lemma~\ref{lem:lemma_smpl.main}.
\begin{lemma}\label{lem:approx.main}
For all $k \leq |\Pi|$, it holds that 
  \begin{align*}
    s^\Pi_{k}(L_{\cc(\cy)}) = e_{k}(L_{\cc(\cy)}) = e_k(\tilde{L}).
  \end{align*}
\end{lemma}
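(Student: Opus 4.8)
The plan is to prove the two claimed equalities separately, observing that both are essentially already contained in the normalization computation inside the proof of Lemma~\ref{lem:lemma_smpl.main}; the work is mainly to isolate and reorganize that argument and to verify the range condition $k \le |\Pi|$. Throughout I treat $s^\Pi_k$ as a sum of principal minors restricted to $k$-singular index sets, and $e_k$ as the sum of all principal $k\times k$ minors.

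First I would establish $s^\Pi_k(L_{\cc(\cy)}) = e_k(L_{\cc(\cy)})$. By definition $e_k(L_{\cc(\cy)}) = \sum_{|Y|=k}\det((L_{\cc(\cy)})_Y)$ ranges over all $k$-subsets $Y\subseteq\cy$, whereas $s^\Pi_k(L_{\cc(\cy)})$ ranges only over $k$-singular $Y$, so the difference is the sum over $k$-nonsingular $Y$. As already noted in Lemma~\ref{lem:lemma_smpl.main}, if $Y$ is nonsingular then two of its points lie in a common part $\cy_c$ and hence map to the same core $c$ in $L_{\cc(\cy)}$; the corresponding submatrix then has two identical rows, so $\det((L_{\cc(\cy)})_Y)=0$. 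Every nonsingular term therefore vanishes, and the two sums coincide.

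Next I would establish $e_k(L_{\cc(\cy)}) = e_k(\Lc)$ by a change of summation variable from $k$-singular sets $Y$ to their underlying core sets $C=\cc(Y)$. For a singular $Y=\{y_1,\ldots,y_k\}$ with $y_i\in\cy_{c_i}$ and distinct cores, one has $(L_{\cc(\cy)})_Y = L_C$ where $C=\{c_1,\ldots,c_k\}$, and each $k$-element core set $C$ arises from exactly $\prod_{c\in C}|\cy_c|$ singular sets $Y$ (one free choice in each part). Combining with the first step gives $e_k(L_{\cc(\cy)}) = \sum_{C\subseteq\cc,\,|C|=k}(\prod_{c\in C}|\cy_c|)\det(L_C)$. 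The crux is then the rescaling identity: writing $D=\mathrm{diag}(\sqrt{|\cy_c|})_{c\in C}$, the definition $\Lc_{c,c'}=\sqrt{|\cy_c||\cy_{c'}|}L_{c,c'}$ gives $\Lc_C = D L_C D$, so that $\det(\Lc_C) = \det(D)^2\det(L_C) = (\prod_{c\in C}|\cy_c|)\det(L_C)$. Summing over all $|C|=k$ reproduces exactly $e_k(\Lc)$.

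The argument is short, so the only genuine subtlety — and the step I would check most carefully — is the bookkeeping in the change of variables: confirming that the multiplicity $\prod_{c\in C}|\cy_c|$ produced by counting singular preimages is precisely cancelled by the determinant scaling $\det(D)^2$ arising from the symmetric rescaling used to build $\Lc$. I would also record where the hypothesis $k\le|\Pi|$ enters: for $k>M=|\Pi|$ there are no $k$-singular sets and $\Lc$, being $M\times M$, has no $k\times k$ minor, so all three quantities vanish trivially; restricting to $k\le|\Pi|$ simply keeps the statement in its meaningful regime.
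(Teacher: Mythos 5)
Your proof is correct and follows essentially the same route as the paper: the first equality comes from noting that nonsingular sets contribute zero because $(L_{\cc(\cy)})_Y$ has repeated rows, and the second is the change of variables to core sets combined with the rescaling identity $\det(\Lc_C)=(\prod_{c\in C}|\cy_c|)\det(L_C)$, which is exactly the computation the paper performs in the proof of Lemma~\ref{lem:lemma_smpl.main} and then cites here. Your explicit bookkeeping of the multiplicity cancellation and the remark on the regime $k\le|\Pi|$ are sound additions but do not change the argument.
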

  \vspace{-.1in}
  \begin{proof}
  \begin{align*}
  s^\Pi_{k}&(L_{\cc(\cy)}) = \sum_{Y\text{ $k$-sing.}} \det((L_{\cc(\cy)})_Y) = \sum_{Y\text{ $k$-sing.}} \det(L_{\cc(Y)})\\
  &= \sum_{|Y|=k}\det(L_{\cc(Y)}) = e_k(L_{\cc(\cy)}) = e_k(\Lc);
\end{align*}
the last equality was shown in the proof of~\refthm{thm:varbound.main}.
  \vspace{-.1in}
  \end{proof}
Computing the normalizer $e_k(\tilde{L})$ only needs $O(M^3)$ time. We refer to this acceleration as \algoz.

Second, when constructing $\cc$, we observed that $Z_{\cc}$ is commonly much smaller than $Z$. Hence, a fast approximation merely greedily increases $Z_{\cc}$ without computing $Z$.

Third, we can be lazy in a number of updates: for example, we only consider changing cores for the part that changes. When a part $\cy_c$ receives a new member, we  check whether to switch the current core $c$ to the new member.
  This reduction keeps the core adjustment at time $O(M^3)$.
Moreover, when re-assigning an element $y$ to a different part $\cy_c$, it is usually sufficient to only check a few, say, $\nu$ parts with cores closest to $y$, and not all parts. The resulting time complexity for each element is $O(M^3)$. 

\begin{algorithm}[t]\small
	\caption{Iterative construction of $\Pi$ and $\cc$}\label{dcsalgo}
	\begin{algorithmic} 
	\REQUIRE{$\Pi$ initial partition; $\cc$ initial coreset; $k$ the size of sampled subset; $\nu$ number of nearest neighbors taken into consideration}
	\WHILE{not converged}
		\FORALL{$y\in\cy$}
			\STATE $c\leftarrow$ group in which $y$ lies currently: $y \in \cy_c$
			\IF{$y\in\cc$}
				\STATE continue
			\ENDIF
			\STATE $G\leftarrow \{$groups of $\nu$ cores nearest to $X_y\}$
			\STATE $g^* = \argmax_{g\in G} s_{k-1}^\Pi(\widehat{L}_{\backprime g}^{y})$
			\IF{$c\ne g^*$}
			\STATE $\cy_c = \cy_c\backslash\{y\}$
			\STATE $\cy_{g^*} = \cy_{g^*}\cup\{y\}$
				\IF{$e_{k}(L_{\cc^{g^*,j}(\cy)}) > e_{k}(L_{\cc^{c,j}(\cy)})$}
					\STATE $\cc \leftarrow \cc^{g^*,y}$
				\ENDIF
			\ENDIF
		\ENDFOR
		\FORALL{$g\in[M]$}
			\STATE $j = \argmax_{j\in\cy_g} e_{k}(L_{\cc^{g,j}(\cy)})$
			\STATE $\cc = \cc^{g,j}$
		\ENDFOR
	\ENDWHILE
\end{algorithmic}
\end{algorithm}

With this collection of speedups, the approximate construction of $\Pi$ and $\cc$ takes $O(N M^3)$ for each iteration, which is linear in $N$, and hence a huge speedup over direct methods that require $O(N^3)$ preprocessing. The iterative algorithm is shown in Algorithm \ref{dcsalgo}. 
The initialization also affects the algorithm performance, and in practice we find that kmeans++ as an initialization works well. Thus we use \algo to refer to the algorithm that is initialized with kmeans++ and uses all the above accelerations. 
In practice, the algorithm converges very quickly, and most of the progress occurs in the first pass through the data. Hence, if desired, one can even use early stopping.

\section{Experiments}\label{sec:exp}
\vspace*{-4pt}
We next evaluate \algo, and compare its efficiency and effectiveness against three competing approaches:
\begin{list}{-}{\leftmargin=1.2em}
  \vspace*{-8pt}
  \setlength{\itemsep}{-2pt}
\item Partitioning using $k$-means (with kmeans++ initialization~\cite{Arthur2007}), with $\cc$ chosen as the centers of the clusters; referred to as {\it K++} in the results.
\item The adaptive, stochastic Nystr\"om sampler of~\cite{Kulesza2013} ({\it NysStoch}). 
  We used $M$ dimensions for NysStoch, to use the same dimensionality as \algo.
\item The Metropolis-Hastings DPP sampler \textit{MCDPP}~\cite{Kang2013}.  
  We use the well-known Gelman and Rubin multiple sequence diagnostic~\cite{gelman1992inference} to empirically judge mixing. 
\end{list}
In addition, we show results using different variants of \algo: \algoz described in~\refsec{sec:approx} and variants that are initialized either randomly~(\algor) or via kmeans++~(\algo).

\subsection{Synthetic Dataset}
\label{sec:synth}
We first explore the effect of our fast approximate sampling on controllable synthetic data.
The experiments here compare the accuracy of the faster \algo from Section~\ref{sec:approx} to \algoz, \algor and {\it K++}.

We generate an equal number of samples from each of {\it nClust} 30-dimensional Gaussians with means of varying length ($\ell_2$-norm) and unit variance, and then rescale the samples to have the same length. As the length of the samples increases, $\epsm$ and $\pns$ shrink. Finally, $L$ is a linear kernel.
	Throughout this experiment we set $k=4$ and $N=60$ to be able to exactly compute $\tvnorm{\Ph-P_k}$. We extract $M = 10$ core points and use $\nu = 3$ neighboring cores. Recall from~\refsec{sec:approx} that when considering the parts that one element should be assigned to, it is usually sufficient to only check $\nu$ parts with cores closest to $y$. 
Thus, $\nu = 3$ means we only consider re-assigning each element to its three closest parts.

\paragraph{Results.}

\begin{figure}
\begin{center}
\includegraphics[width=\textwidth]{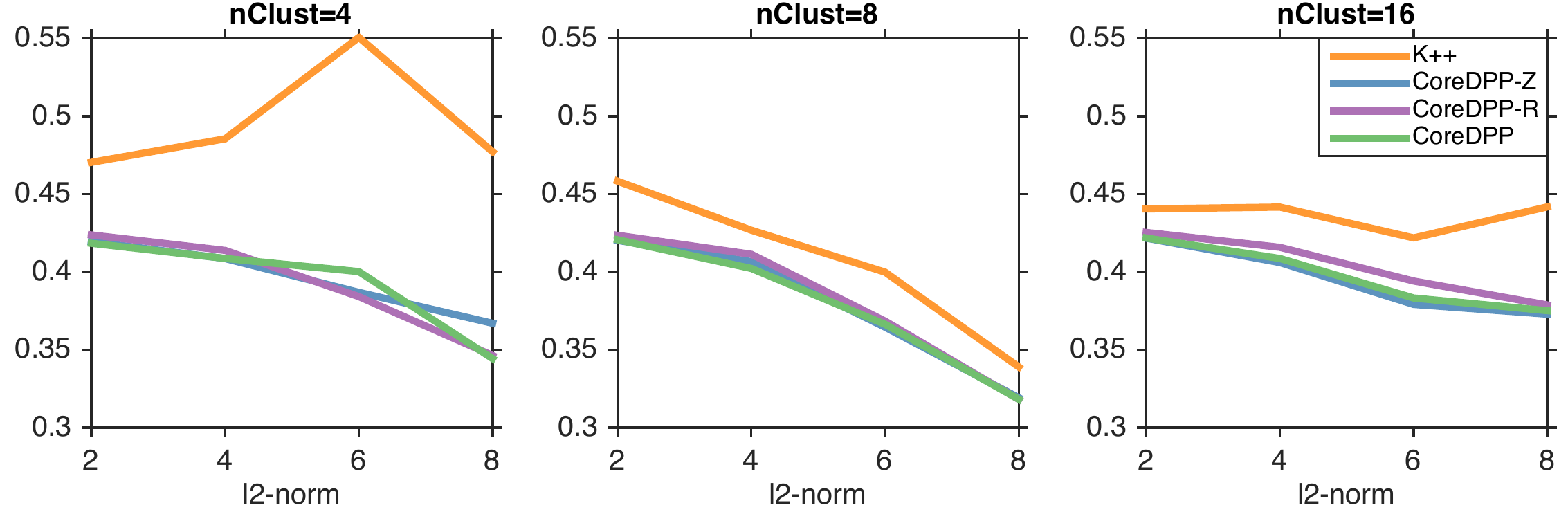}
\end{center}
\caption{Total variation distances (error) on synthetic data with varying {\it nClust} and $\ell_2$-norm.}
\label{fig:syn}
\end{figure}

Fig.~\ref{fig:syn} shows the total variation distance $\tvnorm{\Ph-P_k}$ defined in Equation~\eqref{eq:tv} for the partition and cores generated by {\it K++}, \algo, \algor and \algoz as {\it nClust} and the length vary.
We see that in general, most approximations improve as $\epsm$ and $\pns$ shrink. Remarkably, the \algo variants achieve much lower error than {\it K++}. Moreover, the results suggest that the relaxations from Section~\ref{sec:approx} do not noticeably increase the error in practice. Also, \algor performs comparable with \algo, indicating that our algorithm is robust against initialization.
Since, in addition, the \algo construction makes most progress in the first pass through the data, and the kmeans++ initialization yields the best performance, we use only one pass of \algo initialized with kmeans++ in the subsequent experiments.

\subsection{Real Data}
We apply \algo to two larger real data sets:
\begin{enumerate}
\item  MNIST~\cite{lecun1998gradient}. MNIST consists of images of hand-written digits, each of dimension $28\times 28$. 
\item GENES~\cite{Batmanghelich2014}. This dataset consists of different genes. Each sample in GENES corresponds to a gene, and the features are shortest path distances to 330 different hubs in the BioGRID gene interaction network.
\end{enumerate}
\vspace{-.1in}
For our first set of experiments on both datasets, we use a subset of 2000 data points and an RBF kernel to construct $L$. To evaluate the effect of model parameters on performance, we vary $M$ from 20 to 100 and $k$ from 2 to 8 and fix $\nu = 2$ (see Section~\ref{sec:synth} for an explanation of the parameters).
Larger-scale experiments on these datasets are reported in Section~\ref{sec:ls}.

\paragraph{Performance Measure and Results.}

\begin{figure}
\begin{center}
\includegraphics[width=0.8\textwidth]{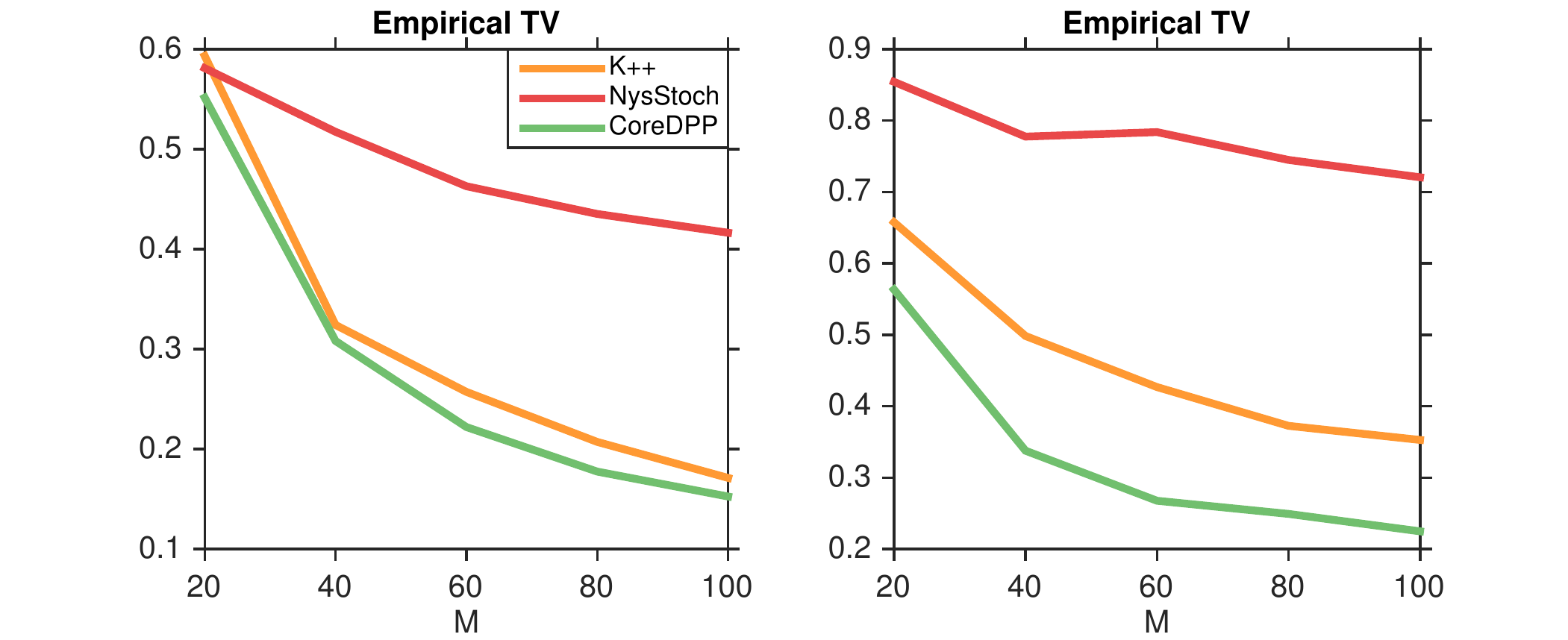}
\end{center}
\caption{Approximate total variation distances (empirical estimate) on MNIST~(left) and GENES~(right) with $M$ varying from 20 to 100 and fixed $k=6$.}
\label{real_perform}
\end{figure}

\begin{figure}
\begin{center}
\includegraphics[width=0.8\textwidth]{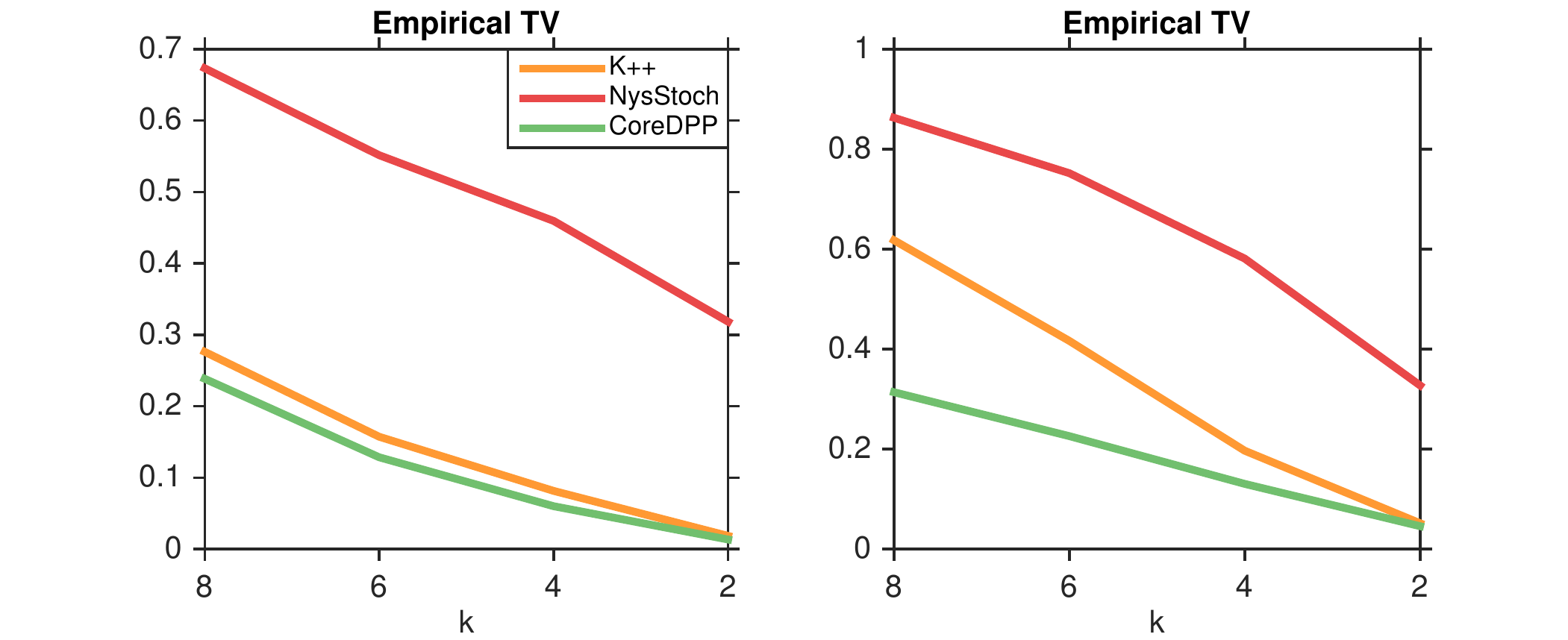}
\end{center}
\caption{Approximate total variation distances (empirical estimate) on MNIST~(left) and GENES~(right) with $k$ varying from 8 to 2 and fixed $M=100$.}
\label{real_perform_k}
\end{figure}

On these larger data sets, it becomes impossible to compute the total variation distance exactly. We therefore approximate it by uniform sampling and computing an empirical estimate. 

The results in Figure~\ref{real_perform} and Figure~\ref{real_perform_k} indicate that the approximations improve as the number of parts $M$ increases and $k$ decreases. This is because increasing $M$ increases the models' approximation power, and decreasing $k$ leads to a simpler target probability distribution to approximate. In general, \algo always achieves lower error than {\it K++}, and {\it NysStoch} performs poorly in terms of total variation distance to the original distribution. This phenomenon is perhaps not so surprising when recalling that the Nystr\"om approximation minimizes a different type of error, a distance between the kernel \emph{matrices}. These observations suggest to be careful when using matrix approximations to approximate $L$.

For an intuitive illustration, Figure~\ref{mnist_showcase} shows a core $\cc$ constructed by \algo, and the elements of one part $\mathcal{Y}_c$.

\begin{figure}
\begin{center}
\includegraphics[width=\textwidth]{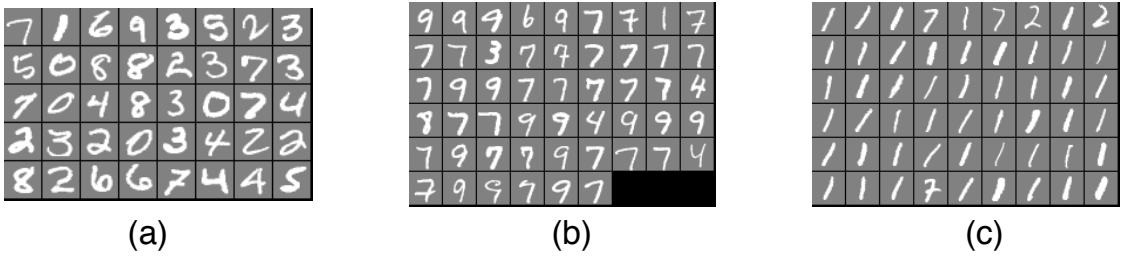}
\end{center}
\caption{(a) Example coreset $\cc$ of size 40, each figure is a core in the coreset constructed by our algorithm; (b,c) Two different parts corresponding to the first and second core.}
\label{mnist_showcase}
\end{figure}

\subsection{Running Time on Large Datasets}
\label{sec:ls}
Lastly, we address running times for \algo, {\it NysStoch} and the Markov chain $k$-DPP~({\it MCDPP}~\cite{Kang2013}). For the latter, we evaluate convergence  via the Gelman and Rubin multiple sequence diagnostic~\cite{gelman1992inference}; we run 10 chains simultaneously and use the {\it CODA}~\cite{coda} package to calculate the potential scale reduction factor~(PSRF), and set the number of iterations to the point when PSRF drops below 1.1. Finally we run {\it MCDPP} again for this specific number of iterations.

For overhead time, i.e., time to set up the sampler that is spent once in the beginning, we compare against {\it NysStoch}: \algo constructs the partition and $\Lc$, while {\it NysStoch} selects landmarks and constructs an approximation to the data. 
For sampling time, we compare against both {\it NysStoch} and {\it MCDPP}: \algo uses \refalgo{smplalgo}, and {\it NysStoch} uses the dual form of $k$-\dpp sampling~\cite{Kulesza2010}. 
We did not include the time for convergence diagnostics into the running time of {\it MCDPP}, giving it an advantage in terms of running time.

\begin{figure}
\begin{center}
\includegraphics[width=0.37\textwidth]{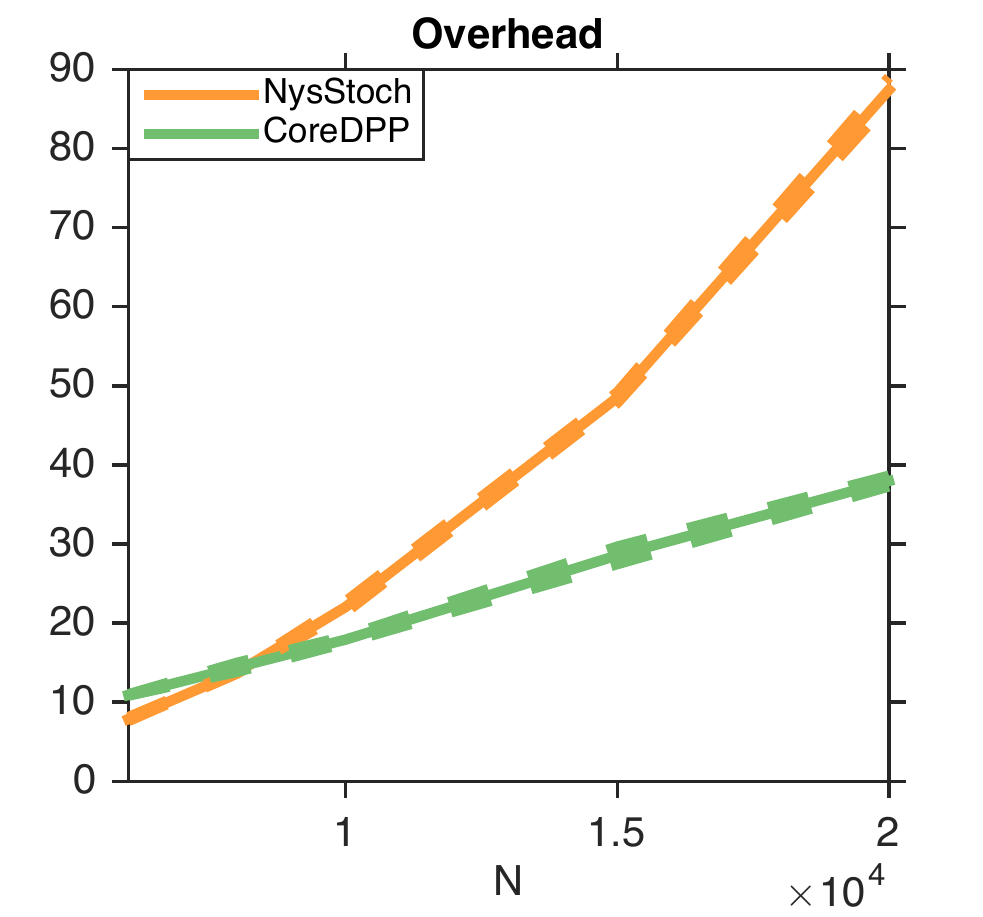}
\includegraphics[width=0.37\textwidth]{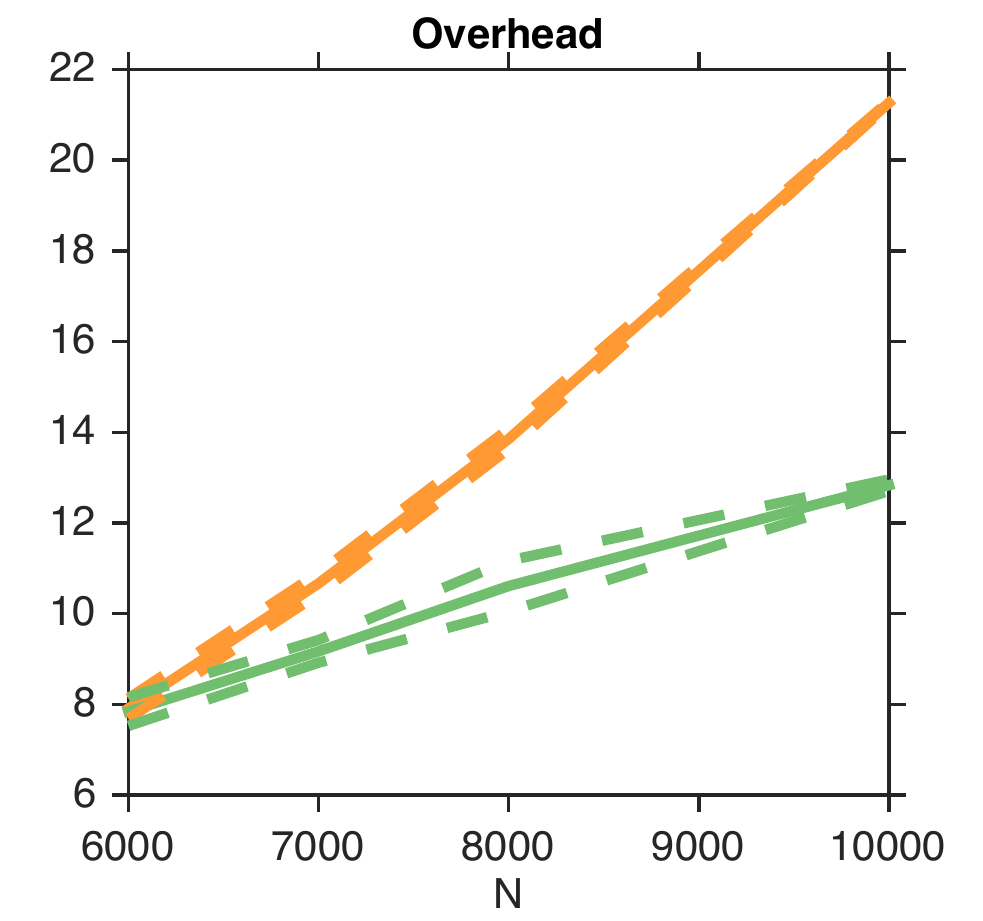}
\end{center}
\caption{Overhead (setup) time in seconds with varying ground set size~($N$) on MNIST~(left) and GENES~(right).}
\label{time_init}
\end{figure}

\paragraph{Overhead.}
\reffig{time_init} shows the overhead times as a function of $N$. For MNIST we vary $N$ from 6,000 to 20,000 and for GENES we vary $N$ from 6,000 to 10,000. These values of $N$ are already quite large, given that the \dpp kernel is a dense RBF kernel matrix; this leads to increased running time for all compared methods.  
The construction time for {\it NysStoch} and \algo is comparable for small-sized data, but {\it NysStoch} quickly becomes less competitive as the data gets larger. The construction time for \algo is linear in $N$, with a mild slope. If multiple samples are sought, this construction can be performed offline as preprocessing as it is needed only once. 

\paragraph{Sampling.}
\reffig{time_smpl} shows the time to draw one sample as a function of $N$, comparing \algo against {\it NysStoch} and {\it MCDPP}. \algo yields samples in time independent of $N$ and is extremely efficient -- it is orders of magnitude faster than {\it NysStoch} and {\it MCDPP}. 

\begin{figure}
\begin{center}
\includegraphics[width=0.38\textwidth]{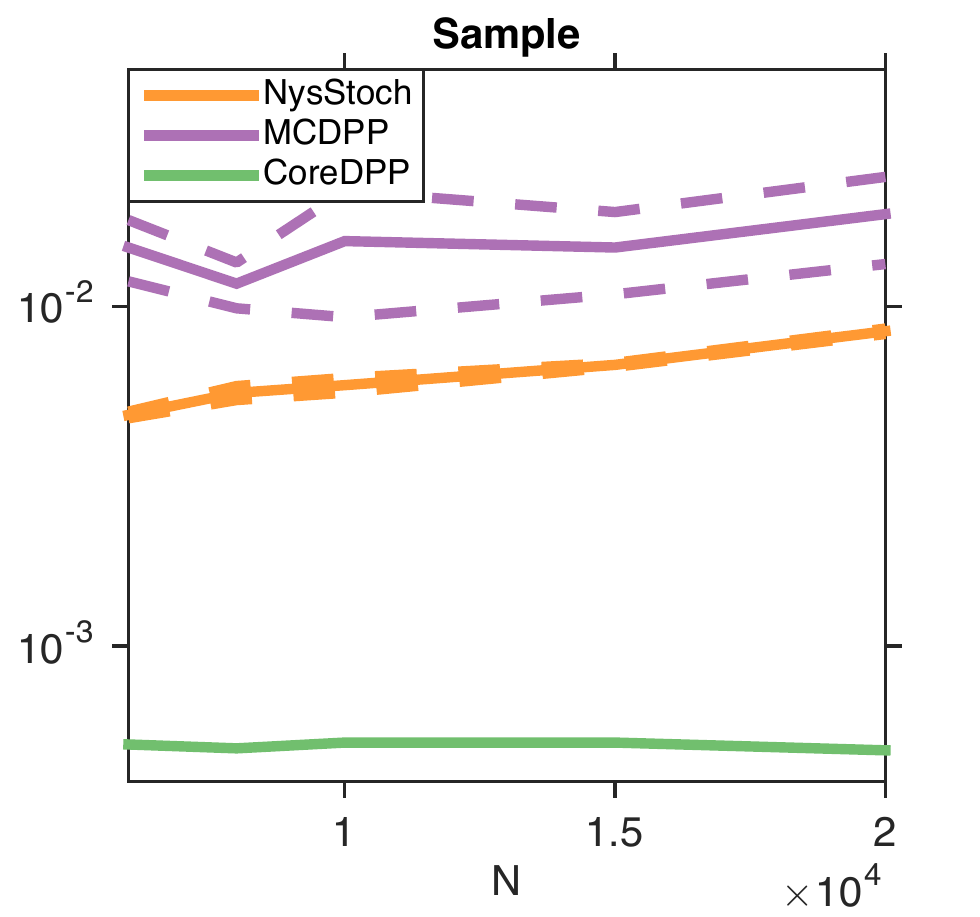}
\includegraphics[width=0.38\textwidth]{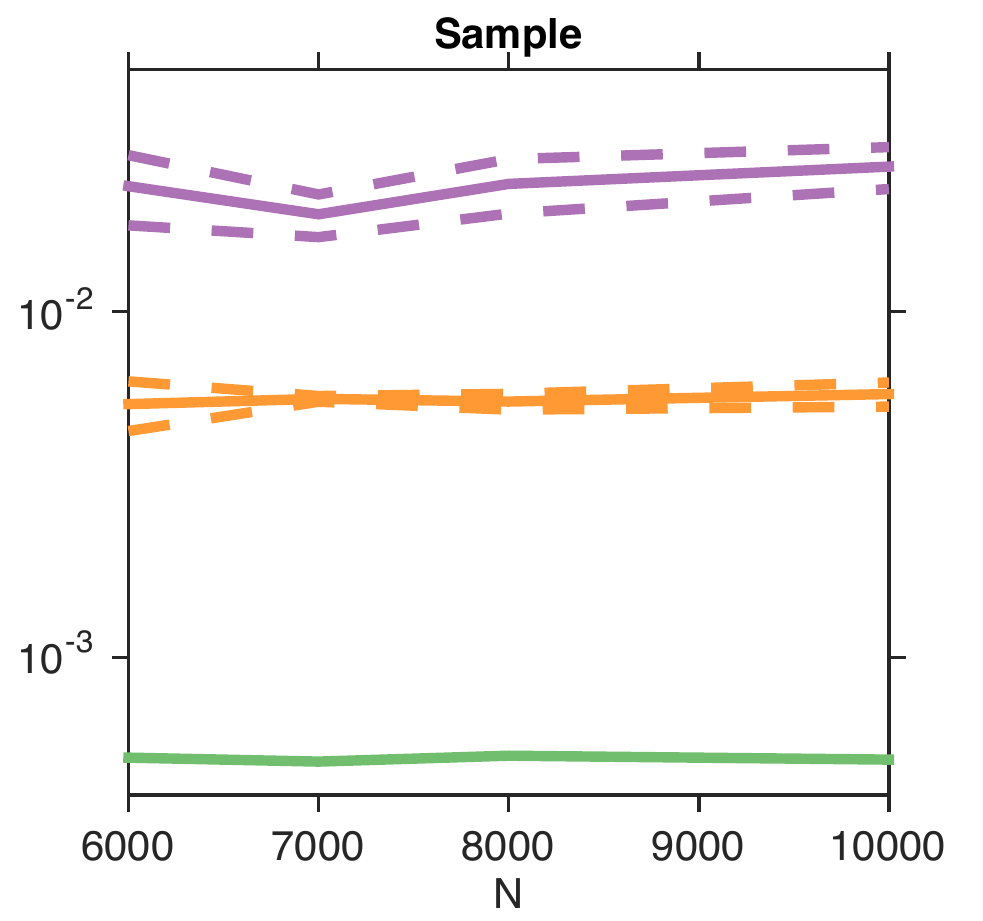}
\end{center}
\caption{Average time for drawing one sample as the ground set size~($N$) varies on MNIST~(left) and GENES~(right). Note that the time axis is shown in log scale.}
\label{time_smpl}
\end{figure}

We also consider the time taken to sample a large number of subsets, and compare against both {\it NysStoch} and {\it MCDPP}---the sampling times for drawing approximately independent samples with {\it MCDPP} add up. \reffig{time_tot} shows the results. As more samples are required, \algo becomes increasingly efficient relative to the other methods.

\begin{figure}
\begin{center}
\includegraphics[width=0.38\textwidth]{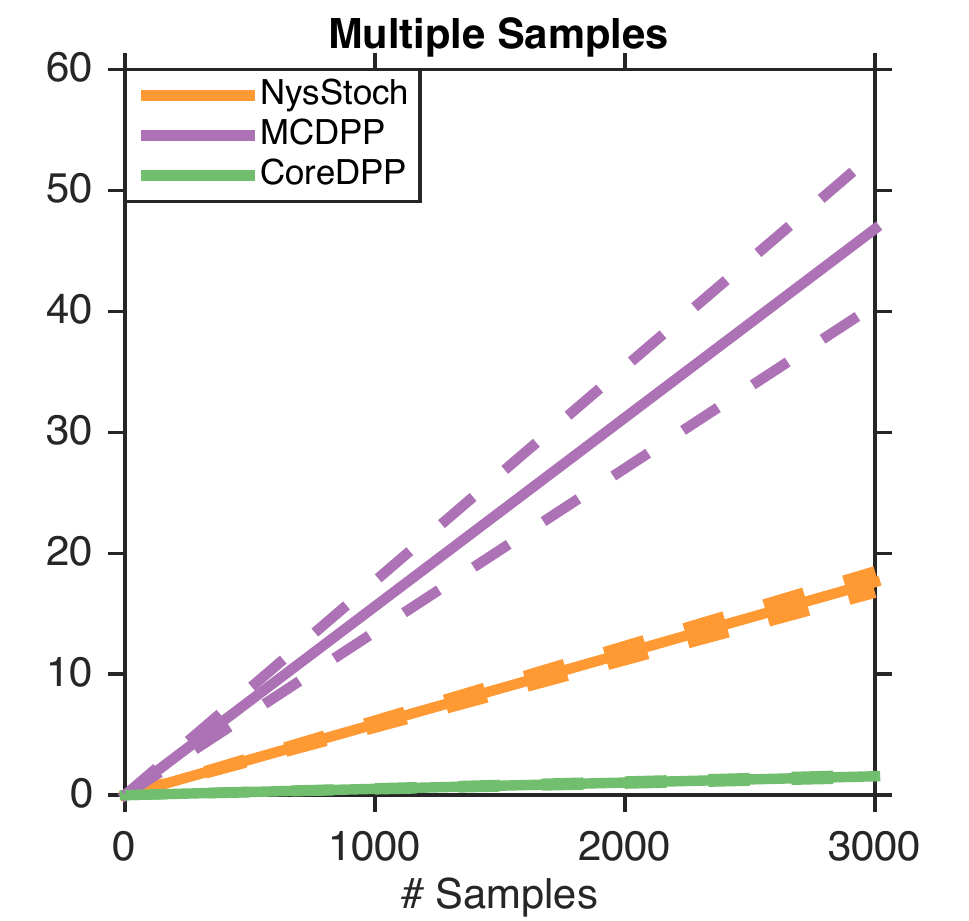}
\includegraphics[width=0.38\textwidth]{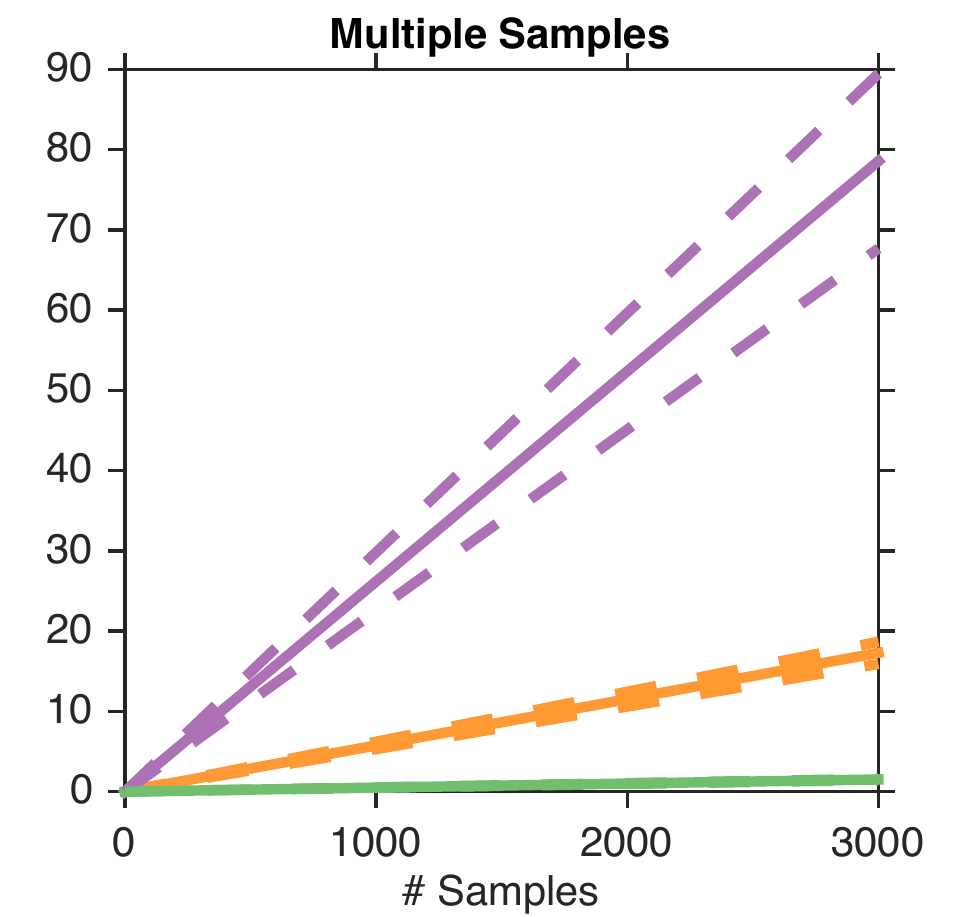}
\end{center}
\caption{Average time for sampling different numbers of subsets with $N=5000$, $M = 40$ and $k = 5$ on MNIST~(left) and GENES~(right). }
\label{time_tot}
\end{figure}

\section{Conclusion}
In this paper, we proposed a fast, two-stage sampling method for sampling diverse subsets with $k$-\dpp{}s. As opposed to other approaches, our algorithm directly aims at minimizing the total variation distance between the approximate and original probability distributions. Our experiments demonstrate the effectiveness and efficiency of our approach: not only does our construction have lower error in total variation distance compared with other methods, it also produces these more accurate samples efficiently, at comparable or faster speed than other methods. 

\paragraph{Acknowledgements.}
This research was partially supposed by an NSF CAREER award 1553284 and a Google Research Award.

\bibliographystyle{abbrvnat}

\begin{thebibliography}{47}
\providecommand{\natexlab}[1]{#1}
\providecommand{\url}[1]{\texttt{#1}}
\expandafter\ifx\csname urlstyle\endcsname\relax
  \providecommand{\doi}[1]{doi: #1}\else
  \providecommand{\doi}{doi: \begingroup \urlstyle{rm}\Url}\fi

\bibitem[Affandi et~al.(2012)Affandi, Kulesza, and Fox]{Affandi2012}
R.~H. Affandi, A.~Kulesza, and E.~B. Fox.
\newblock {Markov Determinantal Point Processes}.
\newblock In \emph{Uncertainty in Artificial Intelligence (UAI)}, 2012.

\bibitem[Affandi et~al.(2013{\natexlab{a}})Affandi, Fox, and
  Taskar]{Affandi2013}
R.~H. Affandi, E.~Fox, and B.~Taskar.
\newblock Approximate inference in continuous determinantal processes.
\newblock In \emph{Advances in Neural Information Processing Systems (NIPS)},
  2013{\natexlab{a}}.

\bibitem[Affandi et~al.(2013{\natexlab{b}})Affandi, Kulesza, Fox, and
  Taskar]{Kulesza2013}
R.~H. Affandi, A.~Kulesza, E.~B. Fox, and B.~Taskar.
\newblock {Nystrom Approximation for Large-Scale Determinantal Processes}.
\newblock In \emph{Proc. Int. Conference on Artificial Intelligence and
  Statistics (AISTATS)}, 2013{\natexlab{b}}.

\bibitem[Affandi et~al.(2014)Affandi, Fox, Adams, and Taskar]{Fox2014}
R.~H. Affandi, E.~B. Fox, R.~P. Adams, and B.~Taskar.
\newblock {Learning the Parameters of Determinantal Point Process Kernels}.
\newblock In \emph{Int. Conference on Machine Learning (ICML)}, 2014.

\bibitem[Agarwal et~al.(2014)Agarwal, Choromanska, and
  Choromanski]{Agarwal2009}
A.~Agarwal, A.~Choromanska, and K.~Choromanski.
\newblock Notes on using determinantal point processes for clustering with
  applications to text clustering.
\newblock \emph{arXiv preprint arXiv:1410.6975}, 2014.

\bibitem[Anari et~al.(2016)Anari, Gharan, and Rezaei]{anari16}
N.~Anari, S.~O. Gharan, and A.~Rezaei.
\newblock Monte {C}arlo {M}arkov {C}hain algorithms for sampling strongly
  {R}ayleigh distributions and determinantal point processes.
\newblock \emph{arXiv preprint arXiv:1602.05242}, 2016.

\bibitem[Arthur and Vassilvitskii(2007)]{Arthur2007}
D.~Arthur and S.~Vassilvitskii.
\newblock k-means++: The advantages of careful seeding.
\newblock In \emph{SIAM-ACM Symposium on Discrete Algorithms (SODA)}, 2007.

\bibitem[Bachem et~al.(2015)Bachem, Lucic, and Krause]{bachem2015coresets}
O.~Bachem, M.~Lucic, and A.~Krause.
\newblock Coresets for nonparametric estimation-the case of dp-means.
\newblock In \emph{Int. Conference on Machine Learning (ICML)}, pages 209--217,
  2015.

\bibitem[Batmanghelich et~al.(2014)Batmanghelich, Quon, Kulesza, Kellis,
  Golland, and Bornn]{Batmanghelich2014}
N.~K. Batmanghelich, G.~Quon, A.~Kulesza, M.~Kellis, P.~Golland, and L.~Bornn.
\newblock Diversifying sparsity using variational determinantal point
  processes.
\newblock \emph{arXiv preprint arXiv:1411.6307}, 2014.

\bibitem[Belabbas and Wolfe(2009)]{belabbas2009spectral}
M.-A. Belabbas and P.~J. Wolfe.
\newblock Spectral methods in machine learning and new strategies for very
  large datasets.
\newblock \emph{Proceedings of the National Academy of Sciences}, pages
  369--374, 2009.

\bibitem[Borodin(2009)]{Borodin2009}
A.~Borodin.
\newblock Determinantal point processes.
\newblock \emph{arXiv preprint arXiv:0911.1153}, 2009.

\bibitem[Borodin and Rains(2005)]{Borodin2005}
A.~Borodin and E.~M. Rains.
\newblock {Eynard-Mehta Theorem, Schur Process, and Their Pfaffian Analogs}.
\newblock \emph{Journal of Statistical Physics}, 121\penalty0 (3-4):\penalty0
  291--317, 2005.

\bibitem[B{\"{u}}rgisser et~al.(1997)B{\"{u}}rgisser, Clausen, and
  Shokrollahi]{Biirgisser1997}
P.~B{\"{u}}rgisser, M.~Clausen, and M.~A. Shokrollahi.
\newblock Algebraic complexity theory.
\newblock \emph{Grundlehren der mathematischen Wissenschaften}, 315, 1997.

\bibitem[Decreusefond et~al.(2013)Decreusefond, Flint, and
  Low]{decreusefond2013perfect}
L.~Decreusefond, I.~Flint, and K.~C. Low.
\newblock Perfect simulation of determinantal point processes.
\newblock \emph{arXiv preprint arXiv:1311.1027}, 2013.

\bibitem[Deshpande and Rademacher(2010)]{Deshpande2010}
A.~Deshpande and L.~Rademacher.
\newblock {Efficient Volume Sampling for Row/Column Subset Selection}.
\newblock In \emph{IEEE Symposium on Foundations of Computer Science (FOCS)},
  2010.

\bibitem[Deshpande et~al.(2006)Deshpande, Rademacher, Vempala, and
  Wang]{Deshpande2006}
A.~Deshpande, L.~Rademacher, S.~Vempala, and G.~Wang.
\newblock Matrix approximation and projective clustering via volume sampling.
\newblock In \emph{SIAM-ACM Symposium on Discrete Algorithms (SODA)}, 2006.

\bibitem[Feldman et~al.(2006)Feldman, Fiat, and Sharir]{Feldman2006}
D.~Feldman, A.~Fiat, and M.~Sharir.
\newblock Coresets for weighted facilities and their applications.
\newblock In \emph{IEEE Symposium on Foundations of Computer Science (FOCS)},
  2006.

\bibitem[Feldman et~al.(2013)Feldman, Schmidt, and Sohler]{feldman2013turning}
D.~Feldman, M.~Schmidt, and C.~Sohler.
\newblock Turning big data into tiny data: Constant-size coresets for k-means,
  pca and projective clustering.
\newblock In \emph{SIAM-ACM Symposium on Discrete Algorithms (SODA)}, pages
  1434--1453, 2013.

\bibitem[Gelman and Rubin(1992)]{gelman1992inference}
A.~Gelman and D.~B. Rubin.
\newblock Inference from iterative simulation using multiple sequences.
\newblock \emph{Statistical science}, pages 457--472, 1992.

\bibitem[Gillenwater et~al.(2012{\natexlab{a}})Gillenwater, Kulesza, and
  Taskar]{Gillenwater2012}
J.~Gillenwater, A.~Kulesza, and B.~Taskar.
\newblock {Near-Optimal MAP Inference for Determinantal Point Processes}.
\newblock In \emph{Advances in Neural Information Processing Systems (NIPS)},
  2012{\natexlab{a}}.

\bibitem[Gillenwater et~al.(2012{\natexlab{b}})Gillenwater, Kulesza, and
  Taskar]{Gillenwater2012a}
J.~Gillenwater, A.~Kulesza, and B.~Taskar.
\newblock Discovering diverse and salient threads in document collections.
\newblock In \emph{Conference on Empirical Methods in Natural Language
  Processing (EMNLP)}, 2012{\natexlab{b}}.

\bibitem[Gillenwater et~al.(2014)Gillenwater, Fox, Kulesza, and
  Taskar]{Gillenwater2014}
J.~Gillenwater, E.~Fox, A.~Kulesza, and B.~Taskar.
\newblock {Expectation-Maximization for Learning Determinantal Point
  Processes}.
\newblock In \emph{Advances in Neural Information Processing Systems (NIPS)},
  2014.

\bibitem[Gong et~al.(2014)Gong, Chao, Grauman, and Fei]{Gong2014}
B.~Gong, W.-L. Chao, K.~Grauman, and S.~Fei.
\newblock {Diverse Sequential Subset Selection for Supervised Video
  Summarization}.
\newblock In \emph{Advances in Neural Information Processing Systems (NIPS)},
  2014.

\bibitem[Har-Peled and Kushal(2005)]{har2005smaller}
S.~Har-Peled and A.~Kushal.
\newblock Smaller coresets for k-median and k-means clustering.
\newblock In \emph{Proceedings of the twenty-first annual symposium on
  Computational geometry}, pages 126--134, 2005.

\bibitem[Har-Peled and Mazumdar(2004{\natexlab{a}})]{Har2004}
S.~Har-Peled and S.~Mazumdar.
\newblock On coresets for k-means and k-median clustering.
\newblock In \emph{Symposium on Theory of Computing (STOC)}, pages 291--300,
  2004{\natexlab{a}}.

\bibitem[Har-Peled and Mazumdar(2004{\natexlab{b}})]{har2004coresets}
S.~Har-Peled and S.~Mazumdar.
\newblock On coresets for k-means and k-median clustering.
\newblock In \emph{Symposium on Theory of Computing (STOC)}, pages 291--300,
  2004{\natexlab{b}}.

\bibitem[Hough et~al.(2006)Hough, Krishnapur, Peres, Vir{\'a}g,
  et~al.]{Hough2005}
J.~B. Hough, M.~Krishnapur, Y.~Peres, B.~Vir{\'a}g, et~al.
\newblock Determinantal processes and independence.
\newblock \emph{Probability Surveys}, 2006.

\bibitem[Kang(2013)]{Kang2013}
B.~Kang.
\newblock {Fast Determinantal Point Process Sampling with Application to
  Clustering}.
\newblock \emph{Advances in Neural Information Processing Systems (NIPS)},
  2013.

\bibitem[Krause et~al.(2008)Krause, Singh, and Guestrin]{krause2008}
A.~Krause, A.~Singh, and C.~Guestrin.
\newblock Near-optimal sensor placements in gaussian processes: Theory,
  efficient algorithms and empirical studies.
\newblock \emph{Journal of Machine Learning Research}, 9:\penalty0 235--284,
  2008.

\bibitem[Kulesza and Taskar(2010)]{Kulesza2010}
A.~Kulesza and B.~Taskar.
\newblock {Structured Determinantal Point Processes}.
\newblock \emph{Advances in Neural Information Processing Systems (NIPS)},
  2010.

\bibitem[Kulesza and Taskar(2011{\natexlab{a}})]{Kulesza2011}
A.~Kulesza and B.~Taskar.
\newblock {Learning Determinantal Point Processes}.
\newblock In \emph{Uncertainty in Artificial Intelligence (UAI)},
  2011{\natexlab{a}}.

\bibitem[Kulesza and Taskar(2011{\natexlab{b}})]{Kulesza2011a}
A.~Kulesza and B.~Taskar.
\newblock {k-DPPs: Fixed-Size Determinantal Point Processes}.
\newblock In \emph{Int. Conference on Machine Learning (ICML)},
  2011{\natexlab{b}}.

\bibitem[Kulesza and Taskar(2012)]{Kulesza2012}
A.~Kulesza and B.~Taskar.
\newblock Determinantal point processes for machine learning.
\newblock \emph{arXiv preprint arXiv:1207.6083}, 2012.

\bibitem[Kwok and Adams(2012)]{Zou2012}
J.~T. Kwok and R.~P. Adams.
\newblock Priors for diversity in generative latent variable models.
\newblock In \emph{Advances in Neural Information Processing Systems (NIPS)},
  2012.

\bibitem[LeCun et~al.(1998)LeCun, Bottou, Bengio, and
  Haffner]{lecun1998gradient}
Y.~LeCun, L.~Bottou, Y.~Bengio, and P.~Haffner.
\newblock Gradient-based learning applied to document recognition.
\newblock \emph{Proceedings of the IEEE}, 86\penalty0 (11):\penalty0
  2278--2324, 1998.

\bibitem[Lin and Bilmes(2011)]{lin2011}
H.~Lin and J.~Bilmes.
\newblock A class of submodular functions for document summarization.
\newblock In \emph{Anal. Meeting of the Association for Computational
  Linguistics (ACL)}, 2011.

\bibitem[Magen and Zouzias(2008)]{Magen2008}
A.~Magen and A.~Zouzias.
\newblock Near optimal dimensionality reductions that preserve volumes.
\newblock In \emph{Approximation, Randomization and Combinatorial Optimization.
  Algorithms and Techniques}, 2008.

\bibitem[Mariet and Sra(2015)]{Mariet2015}
Z.~Mariet and S.~Sra.
\newblock {Learning Determinantal Point Processes}.
\newblock In \emph{Int. Conference on Machine Learning (ICML)}, 2015.

\bibitem[Paul(2015)]{paul2015core}
S.~Paul.
\newblock Core-sets for canonical correlation analysis.
\newblock In \emph{Int. Conference on Information and Knowledge Management
  (CIKM)}, pages 1887--1890, 2015.

\bibitem[Plummer et~al.(2006)Plummer, Best, Cowles, and Vines]{coda}
M.~Plummer, N.~Best, K.~Cowles, and K.~Vines.
\newblock Coda: Convergence diagnosis and output analysis for mcmc.
\newblock \emph{R News}, 6\penalty0 (1):\penalty0 7--11, 2006.

\bibitem[Rahimi and Recht(2007)]{Rahimi2007}
A.~Rahimi and B.~Recht.
\newblock {Random Features for Large-scale Kernel Machines}.
\newblock In \emph{Advances in Neural Information Processing Systems (NIPS)},
  2007.

\bibitem[Rosman et~al.(2014)Rosman, Volkov, Feldman, Fisher~III, and
  Rus]{rosman2014coresets}
G.~Rosman, M.~Volkov, D.~Feldman, J.~W. Fisher~III, and D.~Rus.
\newblock Coresets for k-segmentation of streaming data.
\newblock In \emph{Advances in Neural Information Processing Systems (NIPS)},
  pages 559--567, 2014.

\bibitem[Shah and Ghahramani(2013)]{Shah2013}
A.~Shah and Z.~Ghahramani.
\newblock Determinantal clustering processes - a nonparametric bayesian
  approach to kernel based semi-supervised clustering.
\newblock \emph{arXiv preprint arXiv:1309.6862}, 2013.

\bibitem[Snoek et~al.(2013)Snoek, Zemel, and Adams]{Snoek2013}
J.~Snoek, R.~Zemel, and R.~P. Adams.
\newblock {A Determinantal Point Process Latent Variable Model for Inhibition
  in Neural Spiking Data}.
\newblock In \emph{Advances in Neural Information Processing Systems (NIPS)},
  2013.

\bibitem[Wang et~al.(2014)Wang, Zhang, Qian, and Zhang]{Wang2014}
S.~Wang, C.~Zhang, H.~Qian, and Z.~Zhang.
\newblock {Using The Matrix Ridge Approximation to Speedup Determinantal Point
  Processes Sampling Algorithms}.
\newblock In \emph{Proc. AAAI Conference on Artificial Intelligence}, 2014.

\bibitem[Zhang(2014)]{Zhang2014}
Z.~Zhang.
\newblock {The Matrix Ridge Approximation: Algorithms}.
\newblock \emph{Machine Learning}, 97:\penalty0 227--258, 2014.

\bibitem[Zhou et~al.(2010)Zhou, Kuscsik, Liu, Medo, Wakeling, and
  Zhang]{zhou2010solving}
T.~Zhou, Z.~Kuscsik, J.-G. Liu, M.~Medo, J.~R. Wakeling, and Y.-C. Zhang.
\newblock Solving the apparent diversity-accuracy dilemma of recommender
  systems.
\newblock \emph{Proceedings of the National Academy of Sciences}, 107\penalty0
  (10):\penalty0 4511--4515, 2010.

\end{thebibliography}

\end{document}